\newtheorem{theorem}{Theorem}
\newtheorem{assumption}{Assumption}
\newtheorem{lemma}{Lemma}
\newtheorem{corollary}{Corollary}
\newcommand{\E}{\mathbb{E}}
\newcommand{\R}{\mathbb{R}}
\DeclareMathOperator*{\argmin}{arg\,min}
\begin{document}
%
\title{Accelerating Mini-batch SARAH by Step Size Rules }
%
%
%

\author{Zhuang~Yang,
         Zengping~Chen,
        ~and~Cheng~Wang,~\IEEEmembership{Senior Member,~IEEE,}
\thanks{(Corresponding authors: Zengping Chen). Z. Yang, Z. P. Chen are with the School of Electronics and Communication Engineering, Sun Yat-sen University, Guangzhou510275, China(e-mail: zhuangyng@163.com, chenzengp@mail.sysu.edu.cn).}
%
\thanks{C. Wang is with the Fujian Key Laboratory of Sensing and Computing for Smart Cities, School of Information Science and Engineering, Xiamen University, Xiamen, FJ 361005, China(email: cwang@xmu.edu.cn}
\thanks{Manuscript received July 25, 2016.}}

\maketitle

\begin{abstract}
StochAstic Recursive grAdient algoritHm (SARAH), originally proposed for convex optimization and also proven to be effective for general nonconvex optimization, has received great attention due to its simple recursive framework for updating stochastic gradient estimates. The performance of SARAH significantly depends on the choice of step size sequence. However, SARAH and its variants often employ a best-tuned step size by mentor, which is time consuming in practice. Motivated by this gap, we proposed a variant of the Barzilai-Borwein (BB) method, referred to as the Random Barzilai-Borwein (RBB) method, to calculate step size for SARAH in the mini-batch setting, thereby leading to a new SARAH method: MB-SARAH-RBB. We prove that MB-SARAH-RBB converges linearly in expectation for strongly convex objective functions. We analyze the complexity of MB-SARAH-RBB and show that it is better than the original method. Numerical experiments on standard data sets indicate that MB-SARAH-RBB outperforms or matches state-of-the-art algorithms.
\end{abstract}

\begin{IEEEkeywords}
Stochastic optimization, mini batches, recursive  iteration, variance reduction.
\end{IEEEkeywords}

\section{Introduction}
\label{introduction}
\IEEEPARstart{S}{tochastic} gradient descent (SGD) type methods have been a core methodology in applications to large scale problems in machine learning and related areas \cite{mairal2009online,zhang2004solving,felzenszwalb2009object,blanz2003face,hoffman2010online,bottou2010large,lin2011large,zhang2014facial,tao2014stochastic,bugallo2017adaptive,du2017learning,li2018preconditioned}. The classical SGD method only requires a single random example per-iteration to approximate the full gradient. Such strategy usually makes SGD perform with a low computational complexity per-iteration. While SGD makes rapid progress early on, the convergence rate of SGD is significantly deteriorated by the intrinsic variance of its stochastic estimator. Even for strongly and smooth problem, SGD only converges sub-linearly \cite{moulines2011non-asymptotic}.

Traditionally, there are three common ways to decrease the variance caused by the stochastic estimate. The first one is taking a decreasing step size sequence \cite{nemirovski2009robust,bottou2018optimization}. However, this will further reduce the convergence rate. Moreover, it's known that the practical convergence of SGD is very sensitive to the choice of step size sequence, which needs to be hand-picked. A second approach is using a mini-batching technique \cite{cotter2011better,konevcny2016mini}. Obviously, this requires more computations. The last method is using important sampling strategy \cite{needell2014stochastic,csiba2018importance}. Although effective, this technique is not always practical, as the computation of the sampling mechanism relates to the dimensionality of model parameters \cite{fu2017cpsg}. In summary, any variance reduction technique does not come for free.

In recent years, some advanced stochastic variance-reduced algorithms have emerged, which use the specific form of objective function and
combine some deterministic and stochastic aspects to reduce variance of the steps. The popular examples of these methods are the stochastic average gradient (SAG) method\cite{Roux2012}, the SAGA method \cite{defazio2014saga}, the stochastic dual coordinate ascent (SDCA) method \cite{shalev2013stochastic}, the stochastic variance reduced gradient (SVRG) method \cite{johnson2013accelerating}, the accelerated mini-batch Prox-SVRG (Acc-Prox-SVRG) method \cite{nitanda2014stochastic}, the mini-batch semi-stochastic gradient descent (mS2GD) method \cite{konevcny2016mini}, the StochAstic Recursive grAdient algoritHm (SARAH) \cite{nguyen2017sarah} and the Stochastic Path-Integrated Differential EstimatoR method (SPIDER) \cite{fang2018spider}, all of which have faster convergence rate than that of SGD. Specifically, these methods work with a fixed step size. However, the step size is often chosen by mentor. Hence, this is time consuming in practice.

More recently, SARAH, originally proposed for convex optimization, is gaining tremendous popularity due to only requiring a simple framework for updating stochastic gradient estimates \cite{nguyen2019finite}. Moreover, SARAH has been proven to be effective for general nonconvex optimization \cite{nguyen2017stochastic,nguyen2018inexact,horvath2018nonconvex,zhou2019stochastic,pham2019proxsarah,nguyen2019optimal}. Actually, SARAH and SVRG \cite{johnson2013accelerating} are two similar methods which  perform a deterministic step often called outer loop, where the full gradient of the objective functions was calculated at the outer loop, then followed by stochastic processes. The only difference between SVRG and SARAH is how the iterative scheme is performed in the inner loop. In addition, SARAH is a recursive method as SAGA \cite{defazio2014saga}, but do not store gradients as SAGA. Especially, different from SVRG and other methods (e.g., SAG, SDCA, mS2GD, etc.), SARAH  does not take an estimator that is unbiased in the last step. Instead, it is unbiased over a long history of the method. Specifically, the advantage of SARAH is that the iterative scheme of the inner loop itself can converge sub-linearly \cite{nguyen2017sarah}.

 Although Nguyen et al. \cite{nguyen2017sarah} pointed out that SARAH uses a large constant step size than that of SVRG, the step size is still chosen by mentor. Including the variants of SARAH also employed a constant step size \cite{nguyen2018inexact,nguyen2019optimal}. In addition, Pham et al. \cite{pham2019proxsarah} proposed proximal SARAH (ProxSARAH) for stochastic composite nonconvex optimization and showed that ProxSARAH works with new constant and adaptive step sizes, where the constant step size is much larger than
existing methods, including proximal SVRG (ProxSVRG) schemes \cite{xiao2014prox} in the single sample case and adaptive step-sizes are increasing along the inner iterations rather than diminishing as in stochastic proximal gradient descent methods. However, it is complicated to compute adaptive step size for ProxSARAH. Especially, ProxSARAH needs to control two step size sequences, which make it difficult to use in practice.

To deal with this demerit associated with SARAH, we propose using the random Barzilai-Borwein (RBB) method to automatically calculate step size for the mini-batch version of SARAH (MB-SARAH), proposed by Nguyen et al. \cite{nguyen2017stochastic} for nonconvex optimization, thereby leading to a new SARAH method named as MB-SARAH-RBB. The RBB method, a variant of the Barzilai-Borwein (BB) method \cite{barzilai1988two}, has been proposed by Yang et al. \cite{YANGrandom} and use it to calculate step size for mini-batch algorithms. However, they just discussed the choice of step size of SVRG-type algorithms, i.e., mS2GD and Acc-Prox-SVRG.

 The key contributions of this work are as follows:
\begin{itemize}
\item[] \textrm{1}) We propose to use the RBB method to compute step size for MB-SARAH and obtain a new SARAH method named as MB-SARAH-RBB. Unlike the work in \cite{YANGrandom}, when using the RBB method to calculate step size, we multiply a constant parameter, which is pivotal to ensure the convergence of MB-SARAH-RBB.
\item[] \textrm{2})  We prove the convergence of our MB-SARAH-RBB method and show that its complexity is better than SARAH in the mini-batch setting.
\item[] \textrm{3}) We conduct experiments for MB-SARAH-RBB on solving logistic regression problem. Experimental results on three benchmark data sets show that the proposed method outperforms or matches state-of-the-art algorithms.
\end{itemize}

The rest of this paper is organized as follows. Section \ref{related work} discusses related works that are relevant to this paper. Section \ref{PFB} presents problem formulation and background. Section \ref{our method} proposes our MB-SARAH-RBB method. Section \ref{cona} presents the convergence analysis of MB-SARAH-RBB for strongly convex objective functions and discusses its complexity. Section \ref{experiments} conducts some empirical comparisons over some state-of-the-art approaches. Section \ref{conclusion} concludes the paper.

Notations: Throughout this paper, we view vectors as columns, and use $w^T$ to denote the transpose of a vector $w$. We use the symbol, $\|\cdot\|$, to denote the Euclidean vector norm, i.e., $\|w\|=\sqrt{w^Tw}$. We use $\E[Z]$ to denote the expectation of a random variable $Z$.

\section{Related Work}
\label{related work}
Early works that compute step sizes adaptively for SGD are based on (i) a function of the errors in the predictions or estimates, or (ii) a function of the gradient of the error measure. For example, Kesten \cite{kesten1958accelerated} pointed out that when consecutive errors in the estimate of the value of a parameter obtained by the Robbins-Monro procedure  \cite{robbins1951stochastic} are of opposite signs, the estimate is in the vicinity of the true value and accordingly the step size ought to be reduced. Further, an alternative version of the gradient adaptive step size algorithm within a stochastic approximation formulation was presented by Benveniste et al. \cite{Benveniste1990Adaptive}. In addition, RMSprop, propounded by Tieleman et al. \cite{tieleman2012divide}, adapts a step size per weight based on the observed sign changes in the gradients. For more related methods, we refer readers to \cite{george2006adaptive,yang2019mini} and references therein.

Recently, due to its simplicity and numerical efficiency, many researchers try to incorporate the BB method and its variants into SGD. For instance, Sopy{\l}a et al. \cite{sopyla2015stochastic} presented several variants of the BB method for SGD to train the linear SVM. Tan et al. \cite{tan2016barzilai} used the BB method to calculate the step size for SGD and SVRG, thereby putting forward two new approaches: SGD-BB and SVRG-BB. Moreover, they showed that SVRG-BB have linear convergence for strongly convex objective functions. To further accelerate the convergence rate of SVRG-BB, mS2GD-BB, incorporating the BB method into mS2GD (a variant of SVRG), was proposed by Yang et al. \cite{yang2018mini}. They presented that mS2GD-BB has linear convergence in expectation for nonsmooth strongly convex objective functions. In addition, Yang et al. \cite{yang2019accelerated} introduced the BB method into accelerated stochastic gradient (ASGD) methods and obtained a series of new ASGD methods. Moreover, for their proposed methods, they finished the proof of the convergence analysis and pointed out that the complexity of their proposed methods achieves the same level as the best known stochastic gradient methods. Further, when considering a ``big batch" for SGD, De et al. \cite{De2017Automated} introduced the backtracking line search and BB methods into SGD to calculate step size. Moreover, they pointed out that the performance of SGD, using an adaptive step size method based on the BB method, is better than that of using the backtracking line search on a range of convex problems. To obtain online step size, Yang et al. \cite{YANGrandom} put forward the RBB method and incorporated it into mS2GD and Acc-Prox-SVRG, generating two new approaches: mS2GD-RBB and Acc-Prox-SVRG-RBB. To avert the denominator being close to zero when using the BB, or RBB methods,  the stabilized Barzilai-Borwein (SBB) step size was proposed by Ma et al. \cite{Ma2018Stochastic}. Especially, they introduced it into SVRG and obtained SVRG-SBB for dealing with the ordinal embedding problem. Moreover, they showed that the SVRG-SBB method converges with a rate, $O(\frac{1}{T})$, where $T$ is the total number of iterations.

In addition to the above-mentioned methods, other strategies of choosing step size were used in SGD. For instance, two adaptive step size schemes, referred to as a recursive step size stochastic approximation (RSA) scheme and a cascading step size stochastic approximation (CSA) scheme, were put forward by Yousefian et al. \cite{yousefian2012stochastic}. They also finished the proof of convergence analysis of two new iteration schemes for strongly convex differentiable stochastic optimization problems. In addition, Mahsereci et al. \cite{JMLR:v18:17-049} suggested performing line search for an estimated function, which is computed by a Gaussian process with random samples. An online step size can also be obtained by using a hypergradient descent, where can be found in \cite{baydin2018online}. To greatly reduce the dependence of the algorithm on initial parameters when using hypergradient, Yang et al. \cite{yang2019mini} introduced the online step size (OSS) into the the mini-batch nonconvex stochastic variance reduced gradient (MSVRG) method \cite{reddi2016stochastic} and obtain the MSVRG-OSS method. Moreover, they showed that MSVRG-OSS has linear converges for strongly convex objective functions. Especially, they pointed out that the MSVRG-OSS method also can be used to deal with nonconvex problems. Other different types of choosing step size for SGD, we refer readers to \cite{almeida1999parameter,george2006adaptive,schaul2013no,tieleman2017divide} and  references therein.

\section{Problem formulation and background}
\label{PFB}
We focus on the following problem
\begin{eqnarray}
\min \limits_{w \in \R ^{d}} P(w)=\frac{1}{n} \sum_{i=1}^n f_{i}(w). \label{eq1-1}
\end{eqnarray}
where $n$ is the sample size, and each $f_i(w): \R^d \rightarrow \R$ is cost function estimating how well parameter $w$ fits the data of the $i$-th sample. Throughout this work, we assume that each $f_i$  has Lipschitz continuous derivatives. Also, we assume that both each $f_i$ and $P(w)$, are strongly convex.

Many problems in applications are often formulated as Problem \eqref{eq1-1}. For example, when setting $f_i(w)=\frac{1}{2}(x_i^Tw-y_i)^2+\frac{\lambda}{2}\|w\|^2$, where $\lambda$ is a regularization parameter, the Problem \eqref{eq1-1} becomes least squares. However, when setting $f_i(w)=\log (1+\exp [-y_ix_i^Tw])+\frac{\lambda}{2}\|w\|^2$, the Problem \eqref{eq1-1} becomes logistic regression. Some other prevalent models, e.g., SVM \cite{wang2012breaking}, sparse dictionary learning \cite{mairal2009online}, low-rank matrix completion \cite{bhojanapalli2016global} and deep learning \cite{sutskever2013importance},  can be written in the form of (\ref{eq1-1}).

To proceed with the analysis of the proposed algorithm, we require making the following common assumptions.
\begin{assumption}
\label{ass1-1}
Each convex function, $f_{i}(w)$, in (\ref{eq1-1}) is $L$-Lipschitz smooth, i.e., there exists $L>0$ such that for all $w$ and $v$ in $\R^{d}$,
\begin{eqnarray}
\|\nabla f_{i}(w)-\nabla f_{i}(v)\| \leq L\parallel w-v\parallel. \label{eq1-3-22}
\end{eqnarray}
\end{assumption}

 Note that this assumption implies that the objective function, $P(w)$, is also $L$-Lipschitz smooth. Moreover, by the property of $L$-Lipschitz smooth function (see in \cite{Nesterov2004Introductory}), we have
 \begin{eqnarray}
P(w)\leq P(v)+ \nabla P(v)^T(w-v) \ +\frac{L}{2}\parallel w-v\parallel^2. \label{eq1-3-23}
\end{eqnarray}

\begin{assumption}
\label{ass1-2}
$P(w)$ is $\mu$-strongly convex, i.e., there exists $\mu>0$ such that for all $w$, $v \in \mathbb{R}^{d}$,
\begin{eqnarray}
(\nabla P(w)-\nabla P(v))^{T}(w-v) \geq \mu \parallel w-v \parallel^{2}, \label{eq1-6}
\end{eqnarray}
or equivalently
\begin{eqnarray}
P(w)\geq P(v) + \nabla P(v) ^{T}(w-v)+\frac{\mu}{2}\|w-v\|^{2} \label{eq1-6-2}.
\end{eqnarray}
\end{assumption}

When setting $w_{*}= \argmin_{w} P(w)$, it is known in \cite{bottou2018optimization} that the strong convexity of $P(w)$ implies that
\begin{eqnarray}
2\mu [P(w)-P(w_*)] \leq \|\nabla P(w)\|^2, \forall w \in \R^d \label{scp-1}.
\end{eqnarray}

In this paper, the complexity analysis aims to bound the number of iterations (or total number of stochastic gradient evaluations) which requires $\E[\|\nabla F(w)\|^2] \leq  \varepsilon$. In this case, we say that $w$ is an $\varepsilon$-accurate solution.

\section{The Algorithm}
\label{our method}
In the following, we begin with the introduction of the RBB step size, and then we put forward our MB-SARAH-RBB method, which incorporates the RBB step size into MB-SARAH.
\subsection{Random Barzilai-Borwein Step Size}
To solve Problem (\ref{eq1-1}), Yang et al. \cite{yang2018random} proposed to use the RBB method to calculate step size for mS2GD, thereby obtaining: mS2GD-RBB. In the inner loop of mS2GD-RBB, the updating scheme of solution sequence is:
\begin{eqnarray}
w_{k+1}=w_k-\eta_k v_k,  \label{mS2GD-RBB}
\end{eqnarray}
where $\eta_k$ is the step size sequence and defined as:
\begin{eqnarray}
\eta_{k}=\frac{1}{b_H}\frac{\|w_{k}-w_{k-1}\|^{2}}{((w_{k}-w_{k-1})^{T}(\nabla P_{S_H}(w_{k})-\nabla P_{S_H}(w_{k-1})))}, \label{eqRBB-1}
\end{eqnarray}
and $v_k$ is the stochastic estimate of $\nabla P(w)$ and defined as:
\begin{eqnarray}
v_k=\nabla P_{S}(w_k)-\nabla P_{S}({\widetilde{w}})+\nabla P(\widetilde{w}), \label{var-1}
\end{eqnarray}
where $\nabla P_{S_H}(w_{k})= \frac{1}{b_H} \sum_{i \in S_H}\nabla f_{i}(w_{k})$, $\nabla P_{S_H}(w_{k-1})= \frac{1}{b_H} \sum_{i \in S_H}\nabla f_{i}(w_{k-1})$, $S_H\subset
\{1, \ldots, n\}$ with size $b_H$, $\nabla P_{S}(w_{k})= \frac{1}{b} \sum_{i \in S}\nabla f_{i}(w_{k})$, $S \subset
\{1, \ldots, n\}$ with size $b$, and $\widetilde{w}$ is an snapshot vector for which the gradient, $\nabla P(\widetilde{w})$, has already been previously calculated in the deterministic step.

Actually, the RBB method satisfies the so-called quasi-Newton property under the  background of stochastic optimization. Specifically, the RBB method can be viewed as a variant of stochastic quasi-Newton method, where the second order information was used. During recent years, more and more researchers and communities show that stochastic quasi-Newton iterates almost as fast as a first order stochastic gradient but only needs less iterations to achieve the same accuracy \cite{bordes2009sgd,byrd2012sample,byrd2016stochastic,agarwal2017second,tripuraneni2018stochastic}.


\subsection{The proposed method}
\label{section2.2}
The MB-SARAH method, proposed by Nguyen et al. \cite{nguyen2017stochastic}, is viewed as a variant of mS2GD. However, the pivotal difference between the mS2GD and MB-SARAH is that the latter uses a new kind of stochastic estimate of $\nabla P(w_k)$, i.e.,
 \begin{eqnarray}
v_k=\nabla P_{S}(w_{k})-\nabla P_{S}(w_{k-1})+v_{k-1}  \label{al-eq-1}
 \end{eqnarray}

 For comparison, the stochastic estimate of mS2GD-RBB is written in a similar way as \eqref{var-1}. Note that for mS2GD-RBB, $v_k$ is an unbiased estimator of the gradient, i.e., from (\ref{var-1}), we have $\E[v_k]=\nabla P(w_k)$. However, it's not true for MB-SARAH.

We introduce the RBB method into MB-SARAH and  obtain a new SARAH method referred to as MB-SARAH-RBB. But different with mS2GD-RBB, when computing the random step size in MB-SARAH, we multiply a parameter, $\gamma$, in (\ref{eqRBB-1}), i.e.,
\begin{eqnarray}
\eta^{'}_{k}=\frac{\gamma}{b_H}\frac{\|w_{k}-w_{k-1}\|^{2}}{((w_{k}-w_{k-1})^{T}(\nabla P_{S_H}(w_{k})-\nabla P_{S_H}(w_{k-1})))}, \label{eqRBB-2}
\end{eqnarray}
where the parameter, $\gamma$, is important to control the convergence of MB-SARAH-RBB.

Now we are ready to describe our MB-SARAH-RBB method (Algorithm \ref{alg1}).
\begin{algorithm}[h]
   \caption{MB-SARAH-RBB}
   \label{alg1}
\begin{algorithmic}
   \State \textbf{Parameters:} update frequency $m$, samples sizes $b$ and $b_H$, initial point $\widetilde{w}_{0}$, initial step size $\eta_{0}$, a positive constant $\gamma$.
   \State {\bfseries for} $s=1, 2, \ldots, $ {\bfseries do}
   \State $w_0=\widetilde{w}_{s-1}$
\State $v_0=\nabla P(w_0)$
\State $w_1=w_0-\eta_0 v_0$
   \For{$k=1$ {\bfseries to} $m-1$}
\State Randomly pick subset $S \subset \{1, \ldots,n\}$ of size $b$,
\vspace{-10pt}
\State
\begin{eqnarray*}
v_k=\nabla P_{S}(w_{k})-\nabla P_{S}(w_{k-1})+v_{k-1}
\end{eqnarray*}
\State Randomly pick subset $S_H \subset \{1, 2, \ldots,n\}$ of size $b_H$, compute a RBB step size:
\State $\eta^{'}_{k}=\frac{\gamma}{b_{H}}\cdot \|w_{k}-w_{k-1}\|_{2}^{2}/((w_{k}-w_{k-1})^{T}(\nabla P_{S_{H}}(w_{k})-\nabla P_{S_{H}}(w_{k-1})))$
\State $w_{k+1}=w_{k}-\eta_{k} v_k$
   \EndFor
\State $\widetilde{w}_{s}=w_{m}$
   \State {\bfseries end for}
\end{algorithmic}
\end{algorithm}

\textbf{Remark:} At the beginning of MB-SARAH-RBB, a step size, $\eta_0$, requires to be specified. However, we observed from the numerical experiments that the performance of MB-SARAH-RBB is not sensitive to the choice of $\eta_0$. It also can be seen from Algorithm \ref{alg1} that, if we always set $\eta_k=\eta$, then MB-SARAH-RBB is reduced to the original MB-SARAH method.

%
%
%
%

\section{Convergence Analysis}
\label{cona}
In this section, we finish the proof of convergence analysis of MB-SARAH-RBB and discuss its complexity. We first provide the following lemmas.

\begin{lemma}\label{derivation_alg}
Under Assumption \ref{ass1-1}, consider MB-SARAH-RBB within one single outer loop in Algorithm~\ref{alg1}, then we obtain
\begin{align*}
&\sum_{k=0}^{m} \E[ \| \nabla P(w_{k})\|^2 ]  \leq \frac{2\mu b_H}{\gamma}\E[P(w_0)-P(w_*)]\\
& +\sum_{k=0}^{m}  \E [\|\nabla P(w_k)-v_k\|^2]-\left(1-\frac{L\gamma}{\mu b_H}\right) \sum_{k=0}^{m} \E\left[\|v_k\|^2\right].   \label{eq:l1}
\end{align*}
\end{lemma}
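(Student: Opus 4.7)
The plan is to adapt the one-step descent inequality from the constant-step-size SARAH analysis of Nguyen et al.~\cite{nguyen2017sarah} so that it accommodates the random step size $\eta'_k$. I would start from the $L$-smoothness descent bound \eqref{eq1-3-23}: since $w_{k+1}=w_k-\eta'_k v_k$, Assumption~\ref{ass1-1} gives
\begin{equation*}
P(w_{k+1}) \leq P(w_k) - \eta'_k \nabla P(w_k)^T v_k + \frac{L(\eta'_k)^2}{2}\|v_k\|^2.
\end{equation*}
Substituting the polarization identity $\nabla P(w_k)^T v_k = \tfrac{1}{2}\bigl[\|\nabla P(w_k)\|^2 + \|v_k\|^2 - \|\nabla P(w_k)-v_k\|^2\bigr]$ into this inequality and isolating $\|\nabla P(w_k)\|^2$ yields the per-iteration bound
\begin{equation*}
\|\nabla P(w_k)\|^2 \leq \frac{2}{\eta'_k}\bigl[P(w_k)-P(w_{k+1})\bigr] + \|\nabla P(w_k)-v_k\|^2 - (1-L\eta'_k)\|v_k\|^2,
\end{equation*}
which is exactly the bound that drives the standard SARAH analysis, now with a random $\eta'_k$ instead of a constant $\eta$.

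The core of the argument is to replace the two random coefficients $\tfrac{2}{\eta'_k}$ and $(1-L\eta'_k)$ by deterministic ones derived from the RBB formula \eqref{eqRBB-2}. Since each $f_i$, and hence $P_{S_H}$, is $\mu$-strongly convex by the standing hypothesis of Section~\ref{PFB}, the denominator of $\eta'_k$ satisfies $(w_k-w_{k-1})^T(\nabla P_{S_H}(w_k)-\nabla P_{S_H}(w_{k-1})) \geq \mu\|w_k-w_{k-1}\|^2$, giving the almost sure bound $\eta'_k\leq \gamma/(\mu b_H)$. Plugging this upper bound into the coefficient of $\|v_k\|^2$ gives $-(1-L\eta'_k)\leq -(1-L\gamma/(\mu b_H))$, which is precisely the last term in the lemma. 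I would then sum the per-iteration inequality from $k=0$ to $m$, use the telescoping identity $\sum_{k=0}^m [P(w_k)-P(w_{k+1})] = P(w_0)-P(w_{m+1}) \leq P(w_0)-P(w_*)$ (the last step by optimality of $w_*$), and take expectation over the mini-batches $S$ and $S_H$ drawn in the inner loop to obtain the stated inequality.

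The step that requires the most care, and the main obstacle, is the coefficient of the drift term. From $\eta'_k\leq\gamma/(\mu b_H)$ one has $\tfrac{2}{\eta'_k}\geq \tfrac{2\mu b_H}{\gamma}$ rather than $\leq$, so a pointwise substitution of $\tfrac{2}{\eta'_k}$ by $\tfrac{2\mu b_H}{\gamma}$ in $\tfrac{2}{\eta'_k}[P(w_k)-P(w_{k+1})]$ is only immediately valid at iterations where $P(w_k)-P(w_{k+1})\leq 0$. To close this gap, I would combine the drift with the strong-convexity inequality $2\mu[P(w_k)-P(w_*)]\leq \|\nabla P(w_k)\|^2$ from \eqref{scp-1} so that the ``excess'' drift can be absorbed into the gradient-norm term already on the left-hand side of the sum, or equivalently apply Abel summation to $\tfrac{1}{\eta'_k}[P(w_k)-P(w_{k+1})]$ together with the uniform two-sided bounds $\gamma/(Lb_H)\leq \eta'_k \leq \gamma/(\mu b_H)$ so that the aggregated drift is controlled by $\tfrac{2\mu b_H}{\gamma}[P(w_0)-P(w_*)]$. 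Once this subtlety is handled, the remainder of the proof is routine algebra followed by the tower property of expectation over $S$ and $S_H$.
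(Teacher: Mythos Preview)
Your approach is essentially the paper's: $L$-smoothness descent, the polarization identity $a^Tb=\tfrac12(\|a\|^2+\|b\|^2-\|a-b\|^2)$, the almost-sure bound $\eta'_k\le \gamma/(\mu b_H)$ from strong convexity of each $f_i$, then sum over $k$ and telescope against $P(w_0)-P(w_*)$. The only difference is cosmetic ordering: the paper first substitutes $\eta_k\mapsto \gamma/(\mu b_H)$ directly in the descent inequality and \emph{then} applies polarization, whereas you polarize first, divide through by $\eta'_k$, and then substitute.

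The gap you flag is real, and the paper does not close it either. Its substitution step passes from $-\eta_k\,\E[\nabla P(w_k)^Tv_k]$ to $-\tfrac{\gamma}{\mu b_H}\,\E[\nabla P(w_k)^Tv_k]$ using only $\eta_k\le \gamma/(\mu b_H)$; this preserves ``$\le$'' only when $\E[\nabla P(w_k)^Tv_k]\le 0$, which is neither proved nor typically true for a descent-type direction. In your formulation the same issue surfaces as $\tfrac{2}{\eta'_k}\ge \tfrac{2\mu b_H}{\gamma}$ multiplying a drift increment $P(w_k)-P(w_{k+1})$ of unknown sign. So the paper's proof, as written, contains precisely the difficulty you identified and simply proceeds without comment. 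Your suggested repairs---absorbing the excess via \eqref{scp-1}, or Abel summation using the two-sided bound $\gamma/(Lb_H)\le\eta'_k\le\gamma/(\mu b_H)$---are reasonable directions but, as you acknowledge, not routine; neither obviously recovers the exact constant $\tfrac{2\mu b_H}{\gamma}$ claimed in the lemma (the Abel route naturally produces $L$ rather than $\mu$ in the leading constant). In short: you have matched the paper's argument and been more scrupulous about its weak point than the paper itself; a fully rigorous derivation of the lemma with the stated constant would require additional work that neither you nor the paper supplies.
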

\begin{proof}
Available in Appendix \ref{appa}
\end{proof}
With minor modification of Lemma 3 in \cite{nguyen2017stochastic}, we obtain the following lemma showing the upper bound for $\mathbb{E}[ \| \nabla P(w_{k}) - v_{k} \|^2 ]$.

\begin{lemma}\label{lemva}
Under Assumption \ref{ass1-1}, consider $v_{k}$ defined by \eqref{al-eq-1} in MB-SARAH-RBB, then for any $k\geq 1$,
\begin{align*}
\E[ \| \nabla P(w_{k}) - v_{k} \|^2 ]  \leq \frac{L^2\gamma^2}{\mu^2 b b_H^2} \left( \frac{n-b}{n-1} \right) \sum_{j=1}^{k} \mathbb{E}[\| v_{j-1} \|^2].
\end{align*}
\end{lemma}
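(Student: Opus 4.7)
The plan is to mirror Lemma~3 of Nguyen et al.\ (the SARAH paper), with the one genuinely new ingredient being an almost-sure upper bound on the random step size $\eta'_{k-1}$. I would condition on the $\sigma$-algebra $\mathcal{F}_k$ generated by all randomness up to and including the computation of $w_k$ but \emph{before} the mini-batch $S$ at iteration $k$ is drawn. Under $\mathcal{F}_k$, the iterates $w_{k-1}, w_k$ and the recursive estimator $v_{k-1}$ are measurable, while $\nabla P_S(w_k)-\nabla P_S(w_{k-1})$ is conditionally an unbiased estimate of $\nabla P(w_k)-\nabla P(w_{k-1})$.

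Using $v_k - v_{k-1} = \nabla P_S(w_k) - \nabla P_S(w_{k-1})$ from \eqref{al-eq-1}, I would decompose $\nabla P(w_k)-v_k = [\nabla P(w_{k-1})-v_{k-1}] - Z_k$, where $Z_k := \nabla P_S(w_k)-\nabla P_S(w_{k-1}) - [\nabla P(w_k)-\nabla P(w_{k-1})]$ satisfies $\E[Z_k\mid\mathcal{F}_k]=0$. A conditional Pythagorean identity then gives $\E[\|\nabla P(w_k)-v_k\|^2\mid\mathcal{F}_k] = \|\nabla P(w_{k-1})-v_{k-1}\|^2 + \E[\|Z_k\|^2\mid\mathcal{F}_k]$. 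Next, the standard sampling-without-replacement variance inequality combined with Assumption~\ref{ass1-1} yields $\E[\|Z_k\|^2\mid\mathcal{F}_k] \leq \frac{n-b}{b(n-1)} L^2 \|w_k-w_{k-1}\|^2$.

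Writing $w_k-w_{k-1} = -\eta'_{k-1}v_{k-1}$, it remains to control $\eta'_{k-1}$, and this is the step that departs from the fixed-step-size SARAH proof and is, I expect, the main obstacle. Since each $f_i$ is $\mu$-strongly convex, for every realization of $S_H$ one has $(w_{k-1}-w_{k-2})^T(\nabla P_{S_H}(w_{k-1})-\nabla P_{S_H}(w_{k-2})) \geq \mu\|w_{k-1}-w_{k-2}\|^2$ term-by-term in the $b_H$-average; substituting into \eqref{eqRBB-2} gives $\eta'_{k-1}\leq \gamma/(\mu b_H)$ pathwise. (For the base index $j=1$ one assumes the user-supplied $\eta_0$ respects the same bound; the algorithm permits this by choice.) Consequently $\|w_k-w_{k-1}\|^2 \leq \gamma^2/(\mu^2 b_H^2)\,\|v_{k-1}\|^2$.

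Taking total expectations and assembling produces the one-step recurrence $\E[\|\nabla P(w_k)-v_k\|^2] \leq \E[\|\nabla P(w_{k-1})-v_{k-1}\|^2] + \frac{L^2\gamma^2}{\mu^2 b b_H^2}\cdot\frac{n-b}{n-1}\,\E[\|v_{k-1}\|^2]$. I then telescope this recurrence from $j=1$ to $k$, using that Algorithm~\ref{alg1} initializes $v_0 = \nabla P(w_0)$ so $\|\nabla P(w_0)-v_0\|^2=0$, which delivers the claimed bound. Once the pathwise step-size bound $\eta'_k\leq \gamma/(\mu b_H)$ is in hand, the remainder of the argument is a direct transcription of the classical SARAH recursion with $\eta$ replaced by $\gamma/(\mu b_H)$.
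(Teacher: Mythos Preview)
Your proposal is correct and matches the paper's intent: the paper does not give an explicit proof of this lemma but simply states that it follows ``with minor modification of Lemma~3 in \cite{nguyen2017stochastic},'' and the modification you identify---the pathwise bound $\eta'_{k}\leq \gamma/(\mu b_H)$ obtained from strong convexity of each $f_i$---is precisely the one the paper itself derives and uses in its proof of Lemma~\ref{derivation_alg}. Your handling of the base case (requiring $\eta_0$ to respect the same bound) and the telescoping from $v_0=\nabla P(w_0)$ are exactly what is needed.
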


Using the above lemmas, we obtain the following convergence rate for MB-SARAH-RBB with one outer loop.


\begin{theorem}
\label{th1}
 Under Assumptions \ref{ass1-1}, \ref{ass1-2} and Lemmas \ref{derivation_alg}, \ref{lemva}, let $w_{*}=\argmin_{w} P(w)$ and choose $S, S_H \subset \{1, \ldots, n\}$ with size $b$ and $b_H$ at random, respectively. Consider MB-SARAH-RBB (within one outer loop in Algorithm \ref{alg1}) with
 \begin{eqnarray}
\frac{L^2\gamma^2}{\mu^2 b b_H^2} \left( \frac{n-b}{n-1} \right)m-\left(1-\frac{L\gamma}{\mu b_H} \right)\leq 0, \label{th1-f11}
\end{eqnarray}
then we have
 \begin{eqnarray*}
\E[\|\nabla P(w_m)\|^2] \leq  \frac{2\mu b_H}{\gamma(m+1)} [P(w_0)-P(w_*)] \label{eq22}
\end{eqnarray*}
\end{theorem}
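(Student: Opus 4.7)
The plan is to combine the two lemmas provided and then use the theorem's hypothesis to cancel out the $\sum_{k=0}^{m}\E[\|v_k\|^2]$ terms, leaving only the initial optimality gap.

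First I would apply Lemma \ref{lemva} and sum it over $k$. Since Algorithm \ref{alg1} initializes $v_0 = \nabla P(w_0)$, the $k=0$ summand in $\sum_{k=0}^{m}\E[\|\nabla P(w_k) - v_k\|^2]$ vanishes, so
\begin{align*}
\sum_{k=0}^{m} \E[\|\nabla P(w_k) - v_k\|^2] &\leq \frac{L^2\gamma^2}{\mu^2 b b_H^2}\left(\frac{n-b}{n-1}\right) \sum_{k=1}^{m}\sum_{j=1}^{k} \E[\|v_{j-1}\|^2].
\end{align*}
Exchanging the order of summation rewrites the double sum as $\sum_{j=1}^{m}(m-j+1)\E[\|v_{j-1}\|^2]$, which I would bound crudely by $m\sum_{k=0}^{m}\E[\|v_k\|^2]$. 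This crude bound is the key maneuver that lets me merge cleanly with Lemma \ref{derivation_alg}.

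Substituting into Lemma \ref{derivation_alg}, the right-hand side becomes
\begin{align*}
\frac{2\mu b_H}{\gamma}\E[P(w_0)-P(w_*)] + \left[\frac{L^2\gamma^2 m}{\mu^2 b b_H^2}\left(\frac{n-b}{n-1}\right) - \left(1-\frac{L\gamma}{\mu b_H}\right)\right]\sum_{k=0}^{m}\E[\|v_k\|^2].
\end{align*}
The hypothesis \eqref{th1-f11} was engineered precisely so that the bracketed coefficient is non-positive, and dropping this non-positive term gives
\begin{align*}
\sum_{k=0}^{m}\E[\|\nabla P(w_k)\|^2] \leq \frac{2\mu b_H}{\gamma}\E[P(w_0)-P(w_*)].
\end{align*}

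To pass from a sum over iterates to the single-iterate bound on $\E[\|\nabla P(w_m)\|^2]$, I would use the standard convention in SARAH-type analyses and interpret the output $w_m$ as drawn uniformly at random from $\{w_0, w_1, \ldots, w_m\}$, so that $\E[\|\nabla P(w_m)\|^2] = \frac{1}{m+1}\sum_{k=0}^{m}\E[\|\nabla P(w_k)\|^2]$. Dividing the previous display by $m+1$ then yields the claimed rate. The main obstacle is really this final reconciliation: the algorithm as written deterministically sets $\widetilde{w}_s = w_m$, but the $(m+1)$ factor in the conclusion is only consistent with a random-output (or equivalently, a best-iterate/minimum) reading, and I would make this convention explicit. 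Once that interpretive point is fixed, the argument is a two-line chain from the two lemmas through hypothesis \eqref{th1-f11}.
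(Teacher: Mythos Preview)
Your proposal is correct and matches the paper's proof essentially line for line: sum Lemma~\ref{lemva} over $k$, bound the triangular double sum by $m$ times a single sum, combine with Lemma~\ref{derivation_alg}, invoke \eqref{th1-f11} to discard the non-positive term, and divide by $m+1$. Your concern about the final step is well-founded---the paper likewise writes $\E[\|\nabla P(w_m)\|^2] = \frac{1}{m+1}\sum_{k=0}^{m}\E[\|\nabla P(w_k)\|^2]$ without comment, implicitly adopting the same uniform-random-output convention you identified, even though Algorithm~\ref{alg1} sets $\widetilde{w}_s = w_m$ deterministically.
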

\begin{proof}
Available in Appendix \ref{appb}
\end{proof}
This result shows that the inner loop of MB-SARAH-RBB with a single outer loop converges sublinearly. Actually, to obtain
 \begin{eqnarray*}
\frac{2\mu b_H}{\gamma(m+1)} [P(w_0)-P(w_*)]\leq \varepsilon, \label{eq22}
\end{eqnarray*}
it is sufficient to choose $m=O\left(\frac{\mu b_H}{\gamma \varepsilon}\right)$. Hence, the total complexity to require an $\varepsilon$-accurate
solution is $n+2m=O\left(n+\frac{\mu b_H}{\gamma \varepsilon}\right)$. Therefore, the following conclusion for complexity bound is obtained.
\begin{corollary}
\label{cor1}
Under Assumption \ref{ass1-1}, consider MB-SARAH-RBB with one outer loop, then $\|\nabla P(w_k)\|^2$ has sublinear convergence in expectation with a rate of $O(\mu b_H/\gamma m)$, and the total complexity to achieve an $\varepsilon$-accurate solution is $O\left(n+\mu b_H/\gamma \varepsilon\right)$.
\end{corollary}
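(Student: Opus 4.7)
The plan is to treat Corollary~\ref{cor1} as a direct accounting exercise built on top of Theorem~\ref{th1}, which already supplies the per-iteration bound. First I would restate the conclusion of Theorem~\ref{th1}, namely that after a single outer loop
\[
\mathbb{E}\bigl[\|\nabla P(w_m)\|^2\bigr] \;\leq\; \frac{2\mu b_H}{\gamma(m+1)}\,\bigl[P(w_0)-P(w_*)\bigr],
\]
provided the parameter constraint \eqref{th1-f11} holds. Since the right-hand side decays as $1/m$ with a multiplicative factor proportional to $\mu b_H/\gamma$, the sublinear rate $O(\mu b_H /(\gamma m))$ stated in the corollary is immediate; this handles the first assertion without any further estimation.

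For the complexity claim, I would invert the inequality: to guarantee $\mathbb{E}[\|\nabla P(w_m)\|^2] \leq \varepsilon$ it suffices to demand
\[
\frac{2\mu b_H}{\gamma(m+1)}\,\bigl[P(w_0)-P(w_*)\bigr] \;\leq\; \varepsilon,
\]
which, after solving for $m$ and absorbing the constant $P(w_0)-P(w_*)$ into the big-$O$, yields $m = O\!\left(\mu b_H /(\gamma \varepsilon)\right)$. Then I would tally the stochastic gradient evaluations per outer iteration: one full gradient computation of cost $n$ at the snapshot $w_0$, plus the inner loop which, at each of its $m$ steps, performs a constant number (proportional to $b$ for the $v_k$ update and $b_H$ for the RBB step size) of component gradient evaluations. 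Treating $b$ and $b_H$ as constants in the asymptotic count, this gives a total of $n + O(m) = O\!\left(n + \mu b_H /(\gamma \varepsilon)\right)$ stochastic gradient evaluations, exactly the bound stated.

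The main obstacle, minor as it is, is bookkeeping consistency with the admissibility condition \eqref{th1-f11} from Theorem~\ref{th1}: the corollary's choice of $m$ must be compatible with that inequality. I would address this by noting that \eqref{th1-f11} places an upper bound on $m$ in terms of $\gamma$, $b$, $b_H$, $L$, $\mu$, $n$, so one views $\gamma$ (and possibly $b_H$) as tunable so that the admissible range of $m$ contains $\Omega(\mu b_H/(\gamma\varepsilon))$ for all sufficiently small $\varepsilon$; this preserves the asymptotic complexity statement while respecting the hypothesis of Theorem~\ref{th1}. No additional tools beyond Theorem~\ref{th1} are required, so the proof is little more than a one- or two-line derivation followed by the per-iteration cost accounting.
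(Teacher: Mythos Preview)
Your proposal is correct and follows essentially the same route as the paper: the paper's entire argument for Corollary~\ref{cor1} is the two-line passage immediately preceding it, which inverts the bound of Theorem~\ref{th1} to get $m=O(\mu b_H/(\gamma\varepsilon))$ and then writes the total cost as $n+2m$. If anything, your treatment is more careful, since the paper does not address the compatibility of this choice of $m$ with the constraint~\eqref{th1-f11}, nor does it spell out the per-step gradient count; you may safely trim that discussion if you wish to match the paper's level of detail.
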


Since $L\gg \mu$, compared with \textbf{Corollary 1} in \cite{nguyen2017stochastic}, we have that the complexity of our MB-SARAH-RBB method is better than the complexity of MB-SARAH which is $O\left(n+\frac{L^2}{\varepsilon^2}\left(\frac{n-b}{n-1}\right)\right)$ when choosing an appropriate mini-batch size $b_H$. Note that, in our MB-SARAH-RBB method, the parameter, $\gamma$, is greater than $\varepsilon$.

Now, we present the estimating convergence of MB-SARAH-RBB with multiple outer steps.

\begin{theorem}
\label{th2}
Under Assumptions \ref{ass1-1}, \ref{ass1-2} and Lemmas \ref{derivation_alg}, \ref{lemva},  let $w_{*}=\argmin_{w} P(w)$ and set $S, S_H \subset \{1, \ldots, n\}$ with size $b$ and $b_H$ at random, respectively. Consider MB-SARAH-RBB with
 \begin{eqnarray*}
\frac{L^2\gamma^2}{\mu^2 b b_H^2} \left( \frac{n-b}{n-1} \right)m-\left(1-\frac{L\gamma}{\mu b_H} \right)\leq 0, \label{th1-f1}
\end{eqnarray*}
then we have
 \begin{eqnarray*}
\E[\|\nabla P(\widetilde{w}_s)\|^2] \leq  (\rho_m)^s \|\nabla P(\widetilde{w}_0)\|^2 \label{eq23}
\end{eqnarray*}
where $\rho_m=\frac{b_H}{\gamma(m+1)}$.
\end{theorem}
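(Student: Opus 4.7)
The plan is to bootstrap the single-outer-loop bound in Theorem~\ref{th1} into a per-outer-iteration contraction, then chain the contractions over $s$. The central observation is that the right-hand side of Theorem~\ref{th1} features the suboptimality gap $P(w_0)-P(w_*)$, while the left-hand side features $\|\nabla P(w_m)\|^2$. Strong convexity allows us to convert the gap into a squared-gradient quantity, making the inequality self-similar across outer loops.

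First, I would fix an outer iteration $s\ge 1$ and apply Theorem~\ref{th1} to it. Since the algorithm sets $w_0=\widetilde w_{s-1}$ and $\widetilde w_s=w_m$, the theorem yields, conditional on $\widetilde w_{s-1}$,
\begin{equation*}
\mathbb{E}\bigl[\|\nabla P(\widetilde w_s)\|^2 \,\big|\, \widetilde w_{s-1}\bigr]
\;\le\; \frac{2\mu b_H}{\gamma(m+1)}\,\bigl[P(\widetilde w_{s-1})-P(w_*)\bigr].
\end{equation*}
Next, I would invoke the strong-convexity consequence \eqref{scp-1}, namely $2\mu[P(w)-P(w_*)]\le \|\nabla P(w)\|^2$, applied at $w=\widetilde w_{s-1}$. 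Substituting this into the previous inequality cancels the $2\mu$ factor and gives
\begin{equation*}
\mathbb{E}\bigl[\|\nabla P(\widetilde w_s)\|^2 \,\big|\, \widetilde w_{s-1}\bigr]
\;\le\; \frac{b_H}{\gamma(m+1)}\,\|\nabla P(\widetilde w_{s-1})\|^2
\;=\; \rho_m\,\|\nabla P(\widetilde w_{s-1})\|^2.
\end{equation*}

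Finally, I would take total expectations and apply the tower property to peel off one outer iteration at a time. A straightforward induction on $s$ then yields the claimed bound
\begin{equation*}
\mathbb{E}\bigl[\|\nabla P(\widetilde w_s)\|^2\bigr]
\;\le\; (\rho_m)^{s}\,\|\nabla P(\widetilde w_0)\|^2,
\end{equation*}
with the base case $s=0$ holding trivially because $\widetilde w_0$ is deterministic.

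I do not expect any serious technical obstacle: the hypothesis \eqref{th1-f11} of Theorem~\ref{th1} is exactly the hypothesis of Theorem~\ref{th2}, so the single-loop bound is available verbatim, and the RBB step size $\eta'_k$ only enters through Lemmas~\ref{derivation_alg} and~\ref{lemva}, which have already been absorbed into Theorem~\ref{th1}. The only mildly delicate point is making sure the conditioning is clean when converting the single-loop statement into a conditional expectation given $\widetilde w_{s-1}$; this is justified because Theorem~\ref{th1} applies to an arbitrary deterministic initial point $w_0$, so conditioning on $\widetilde w_{s-1}$ reduces each outer loop to that setting. Once this is noted, the remainder is simply the geometric chaining described above.
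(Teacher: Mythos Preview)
Your proposal is correct and matches the paper's proof essentially step for step: apply Theorem~\ref{th1} conditionally on $\widetilde w_{s-1}$, use the strong-convexity inequality \eqref{scp-1} to convert $P(\widetilde w_{s-1})-P(w_*)$ into $\|\nabla P(\widetilde w_{s-1})\|^2$, then take expectations and iterate. Your treatment of the conditioning is, if anything, slightly more careful than the paper's.
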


\begin{proof}
Available in Appendix \ref{appc}
\end{proof}

To obtain $\E[\|\nabla P(\widetilde{w}_s)\|^2] \leq  (\rho_m)^s \|\nabla P(\widetilde{w}_0)\|^2 < \varepsilon$, it is sufficient
to set $S=O(\log (1/\varepsilon))$. Therefore, we have the following conclusion for the total complexity of the proposed method.
\begin{corollary}
\label{cor2}
Suppose Assumption \ref{ass1-1} hold, the total complexity of MB-SARAH-RBB to achieve an $\varepsilon$-accurate
solution is $O(\left(n+\mu b_H/\gamma \varepsilon\right)log (1/\varepsilon))$.
\end{corollary}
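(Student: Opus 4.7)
The plan is to treat Corollary \ref{cor2} as a standard ``linear decay plus log iterations'' bookkeeping consequence of Theorem \ref{th2}. Theorem \ref{th2} already gives the per-outer-loop contraction factor $\rho_m = b_H/(\gamma(m+1))$, so the only work is (i) choosing $m$ large enough that $\rho_m$ is bounded below $1$ by a universal constant while still satisfying condition \eqref{th1-f1}, (ii) iterating the geometric bound to get the required number of outer loops $s$, and (iii) multiplying by the per-outer-loop gradient cost.

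First I would fix $m$ to be of the order $\mu b_H/(\gamma \varepsilon)$, consistent with the choice already made in the single-outer-loop analysis leading to Corollary \ref{cor1}. For any $\varepsilon$ small enough this choice makes
\[
\rho_m \;=\; \frac{b_H}{\gamma(m+1)} \;\leq\; \frac{1}{2},
\]
and a routine check shows this $m$ also respects the assumption $\frac{L^2\gamma^2}{\mu^2 b b_H^2}(\frac{n-b}{n-1})m - (1-\frac{L\gamma}{\mu b_H}) \leq 0$ of Theorem \ref{th2} provided $\gamma$ is chosen small enough relative to $\mu b_H/L$. Iterating Theorem \ref{th2} then yields
\[
\E[\|\nabla P(\widetilde{w}_s)\|^2] \;\leq\; (\rho_m)^s \|\nabla P(\widetilde{w}_0)\|^2 \;\leq\; 2^{-s} \|\nabla P(\widetilde{w}_0)\|^2,
\]
so demanding the right-hand side be at most $\varepsilon$ forces $s = O(\log(1/\varepsilon))$ outer loops, exactly the remark preceding the corollary statement.

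Finally I would account the total stochastic gradient evaluations. One outer loop costs one full gradient at the snapshot ($n$ evaluations) plus $O(m)$ inner iterations, each of which uses a constant number of mini-batch gradients of sizes $b$ and $b_H$, for a per-outer-loop cost of $O(n + m) = O(n + \mu b_H/(\gamma \varepsilon))$. Multiplying by the $O(\log(1/\varepsilon))$ outer loops gives the advertised bound $O\bigl((n + \mu b_H/(\gamma \varepsilon)) \log(1/\varepsilon)\bigr)$.

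I do not expect a genuine obstacle here because the corollary is essentially a packaging of Theorem \ref{th2}. The most delicate point is consistency of constants: one has to pick $m$ simultaneously so that Theorem \ref{th2}'s structural condition \eqref{th1-f1} holds and so that $\rho_m$ is strictly less than $1$ (rather than merely finite), and then absorb $\|\nabla P(\widetilde{w}_0)\|^2$ and the base of the logarithm into the $O$-notation. Everything else is a straight substitution.
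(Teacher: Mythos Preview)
Your proposal is correct and follows essentially the same route as the paper: use Theorem \ref{th2}'s geometric decay with $\rho_m = b_H/(\gamma(m+1))$, choose $m = O(\mu b_H/(\gamma\varepsilon))$ so that $\rho_m$ is a constant strictly below $1$, deduce $s = O(\log(1/\varepsilon))$ outer loops, and multiply by the per-outer-loop cost $O(n+m)$. The paper's own argument is the single sentence preceding the corollary; you are actually more careful than the paper in explicitly checking that the chosen $m$ keeps $\rho_m$ bounded away from $1$ and remains compatible with condition \eqref{th1-f1}, points the paper leaves implicit.
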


Compared the complexity of MB-SARAH, Corollary \ref{cor2} indicates that MB-SARAH-RBB has lower complexity when choosing an appropriate mini-batch size $b_H$.

\section{Experiments}
\label{experiments}
In this section, the effectiveness of our MB-SARAH-RBB method is verified with experiments. In particular, our experiments were performed on the well-worn problems of training ridge regression, i.e.,
\begin{eqnarray}
\min \limits_{w \in \mathbb{R}^{d}} ~~P(w):=\frac{1}{n} \sum_{i=1}^n \log(1+\exp(-y_{i}x_{i}^{T}w)) +\frac{\lambda}{2}\|w\|^{2}, \label{experiment-2}
\end{eqnarray}
where $\{(x_i, y_i)\}_{i=1}^{n}\subset \R^d \times \{+1,-1\}^n$ is a collection of training examples.

We tested our MB-SARAH-RBB method on the three publicly available data sets (\emph{a8a},\emph{w8a} and \emph{ijcnn1})\footnote{$a8a$, $w8a$ and $ijcnn1$ can be downloaded  from \url{https://www.csie.ntu.edu.tw/~cjlin/libsvmtools/datasets/}.}.  Detailed information of the data sets are listed in Table \ref{table1}.
\begin{table}[h]
\setlength{\abovecaptionskip}{0pt}
\setlength{\belowcaptionskip}{1pt}
\caption{\small{Data information of experiments}}
\label{table1}
\begin{center}
\begin{tabular}{llllll}
\hline

\hline
\multicolumn{1}{c}{ Dataset}  &\multicolumn{1}{c}{Training size} &\multicolumn{1}{c}{feature}  &\multicolumn{1}{c}{\bf $\lambda$}
\\ \hline

\hline
a8a       &22,696 &123 &$10^{-2}$   \\
w8a    &49,749  &300 &$10^{-2}$  \\
ijcnn1    &49,990  &22 &$10^{-4}$  \\
\hline

\hline
\end{tabular}
\end{center}
\end{table}

\subsection{Properties of MB-SARAH-RBB}
\label{section4.0}
In this subsection, we show the properties of MB-SARAH-RBB conducted using data sets listed in Table \ref{table1}. To clearly show the properties of our MB-SARAH-RBB method, we present the comparison results between MB-SARAH-RBB and MB-SARAH with the best-tuned step size. For ease of analysis, in MB-SARAH-RBB, we take the same batch samples, $b$, as MB-SARAH to obtain the solution sequence, on different data sets.  Therefore, we can easily see the cases of batch samples, $b_H$, for obtaining step size sequence.

In addition, for MB-SARAH-RBB, we chose the parameter, $\gamma$, as 0.1 when the batch samples, $b_H$, is small; otherwise, we take $\gamma=1$, or a slightly large number. Moreover, we set $\eta_0=0.1$ for MB-SARAH-RBB.

 Fig. \ref{cp-our-mb-s-a8a}, \ref{cp-our-mb-s-w8a} and \ref{cp-our-mb-s-ijcnn1} compare MB-SARAH-RBB with MB-SARAH. In all sub-figures, the horizontal axis represents the number of effective passes over the data, where each effective pass evaluates $n$ component gradients. The vertical axis is the sub-optimality, $P(\widetilde{w}_s)-P(w_{*})$, where we obtain $w_{*}$ by performing MB-SARAH with the best-tuned step size. Moreover, the dashed lines represent MB-SARAH with different fixed step sizes and the solid lines represent MB-SARAH-RBB with different batch sizes $b$ and $b_H$. The detailed information of the parameters is given in the legends of the sub-figures.

\begin{figure*}[htbp]
   \centering
    \includegraphics[width=0.40\textwidth]{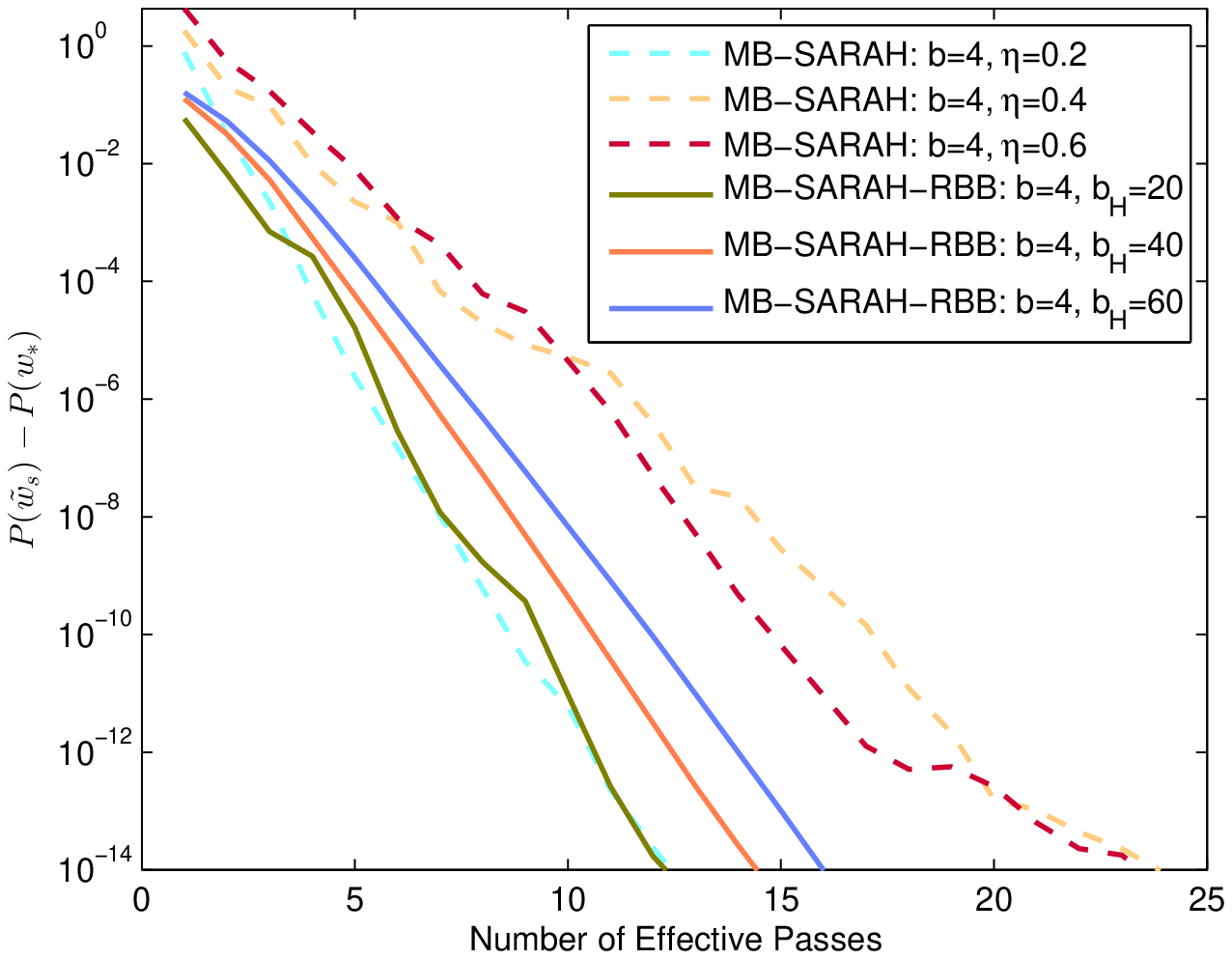}
    \includegraphics[width=0.40\textwidth]{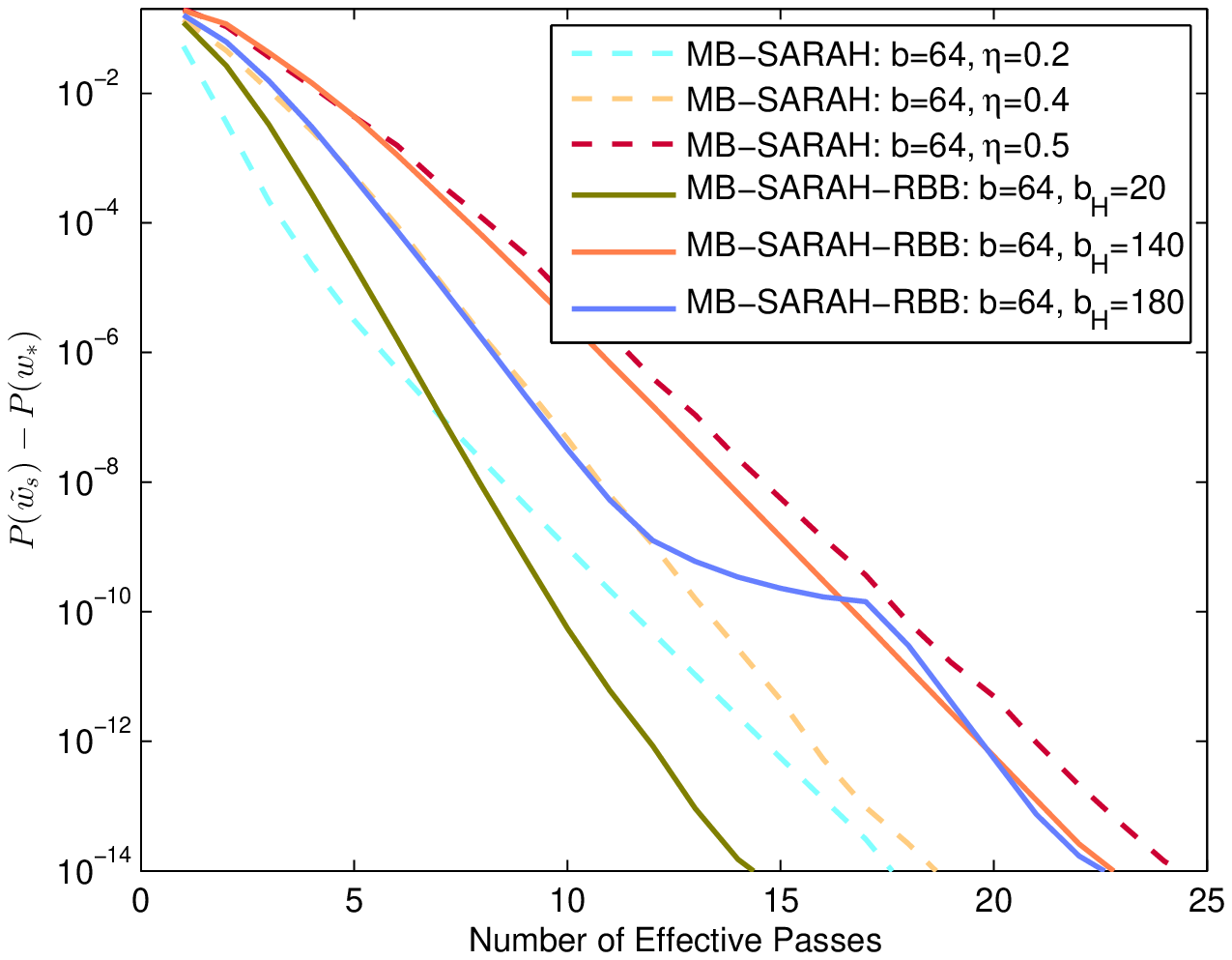}
    \caption{\footnotesize Comparison of MB-SARAH-RBB and MB-SARAH with fixed step sizes on \emph{a8a}. The dashed lines stand for MB-SARAH with different fixed step sizes $\eta$. The solid lines correspond to MB-SARAH-RBB with different mini-batch sizes $b$ and $b_H$.}
  \label{cp-our-mb-s-a8a}
\end{figure*}

\begin{figure*}[htbp]
   \centering
    \includegraphics[width=0.40\textwidth]{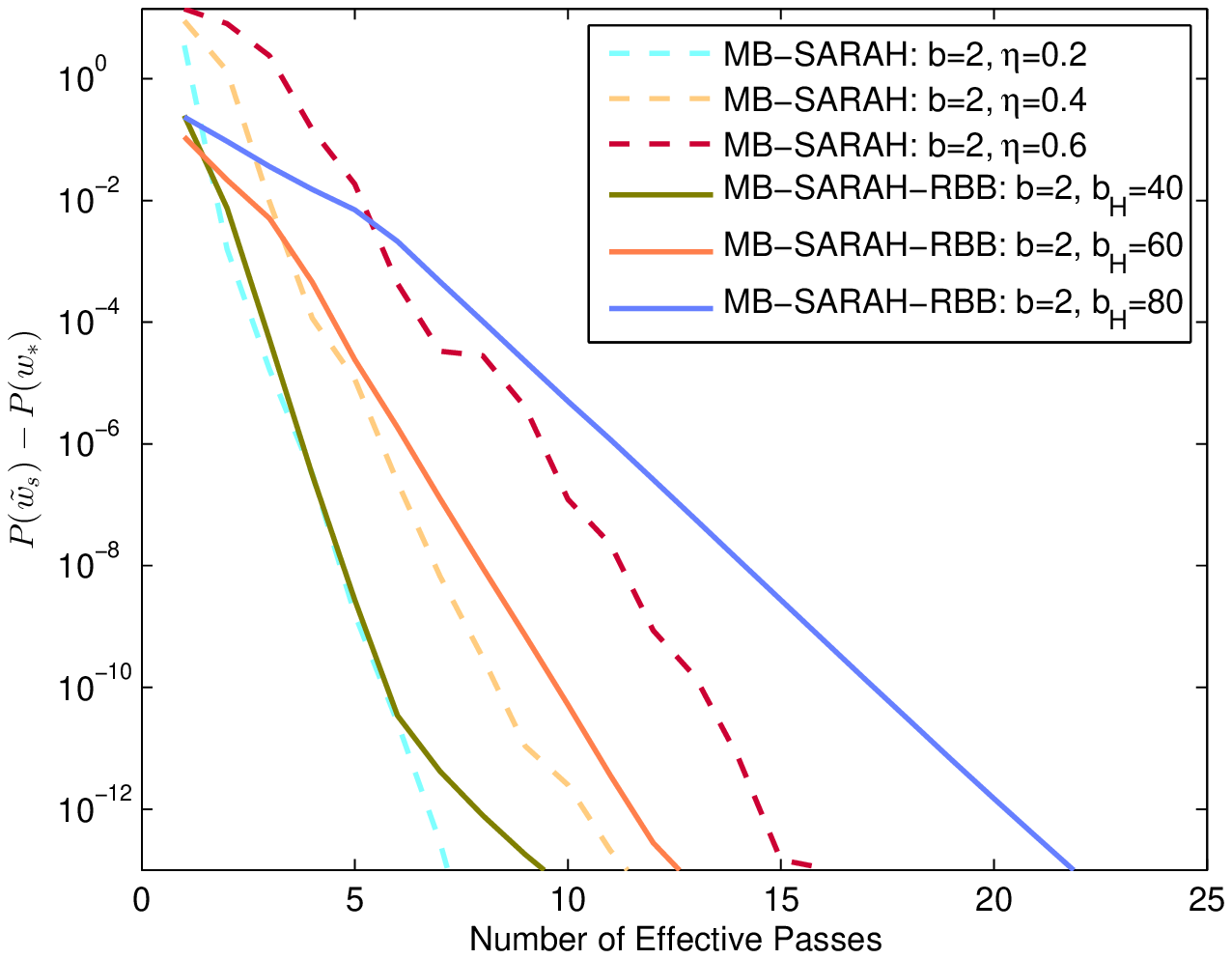}
    \includegraphics[width=0.40\textwidth]{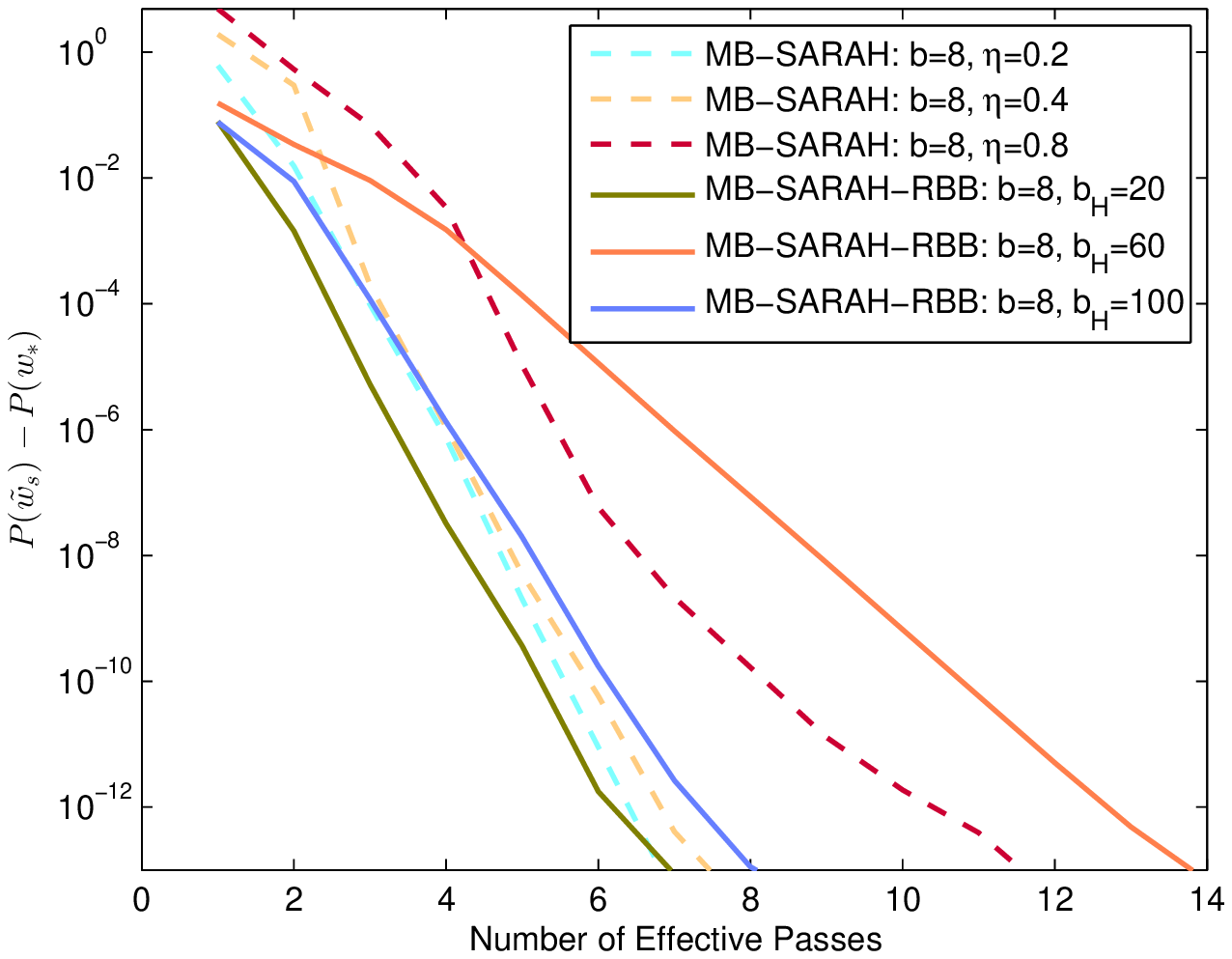}
    \caption{\footnotesize Comparison of MB-SARAH-RBB and MB-SARAH with fixed step sizes on \emph{w8a}. The dashed lines stand for  MB-SARAH with different fixed step sizes $\eta$. The solid lines correspond to MB-SARAH-RBB with different mini-batch sizes $b$ and $b_H$. }
  \label{cp-our-mb-s-w8a}
\end{figure*}

\begin{figure*}[htbp]
   \centering
    \includegraphics[width=0.40\textwidth]{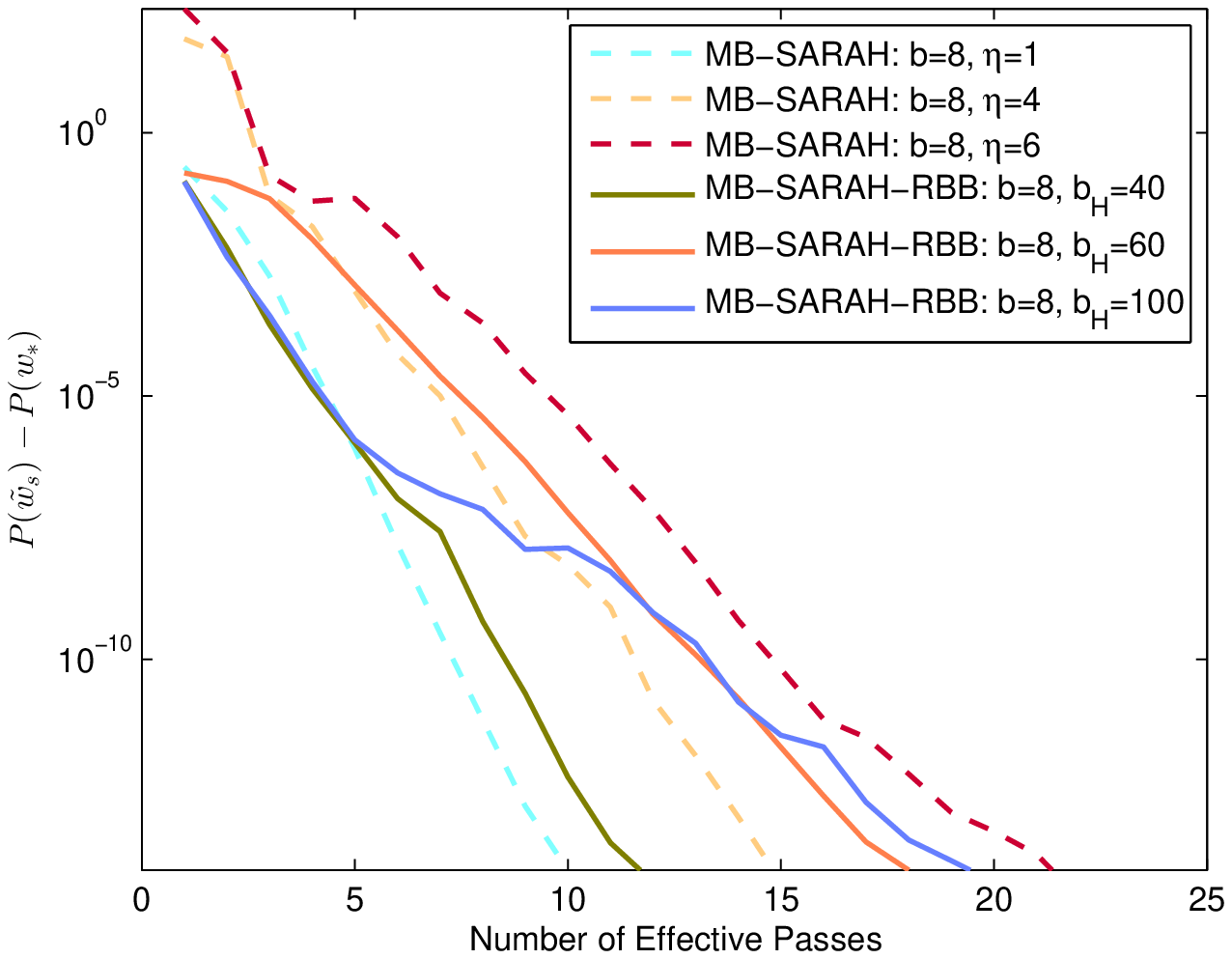}
    \includegraphics[width=0.40\textwidth]{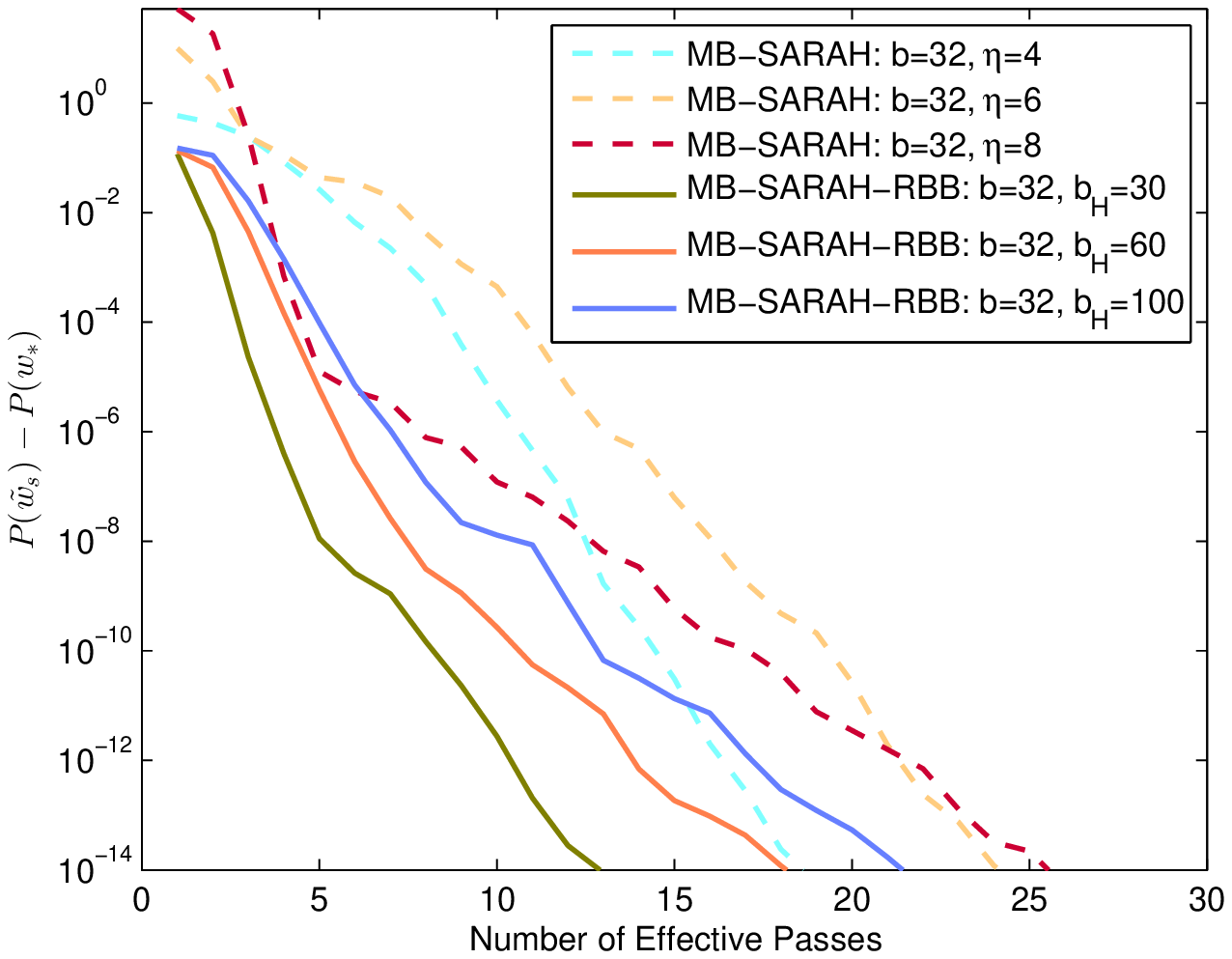}
    \caption{\footnotesize Comparison of MB-SARAH-RBB and MB-SARAH with fixed step sizes on \emph{ijcnn1}. The dashed lines stand for MB-SARAH with different fixed step sizes $\eta$. The solid lines correspond to MB-SARAH-RBB with different mini-batch sizes $b$ and $b_H$.}
  \label{cp-our-mb-s-ijcnn1}
\end{figure*}

 Fig. \ref{cp-our-mb-s-a8a}, \ref{cp-our-mb-s-w8a} and \ref{cp-our-mb-s-ijcnn1} show that, MB-SARAH-RBB is comparable to or performs better than MB-SARAH with best-tuned step size. Also, Fig. \ref{cp-our-mb-s-a8a}, \ref{cp-our-mb-s-w8a} and \ref{cp-our-mb-s-ijcnn1} indicate that, when fixed batch samples, $b$, it is no need to set a large batch samples, $b_H$, to obtain step size sequence. However, a small batch size, $b_H$, makes MB-SARAH-RBB diverge.

 In Algorithm \ref{alg1}, we pointed out that MB-SARAH-RBB is not sensitive to the choice of step size, $\eta_0$. To present this case, we set three different step sizes ($\eta_0=0.01, 0.1, 1$) for MB-SARAH-RBB on $a8a$ and $ijcnn1$ and the results were presented in Fig. \ref{dif-step-size}. We set $b=4, b_H=40$ for $a8a$ and $b=8, b_H=60$ for $ijcnn1$.

 \begin{figure*}[htbp]
   \centering
    \includegraphics[width=0.40\textwidth]{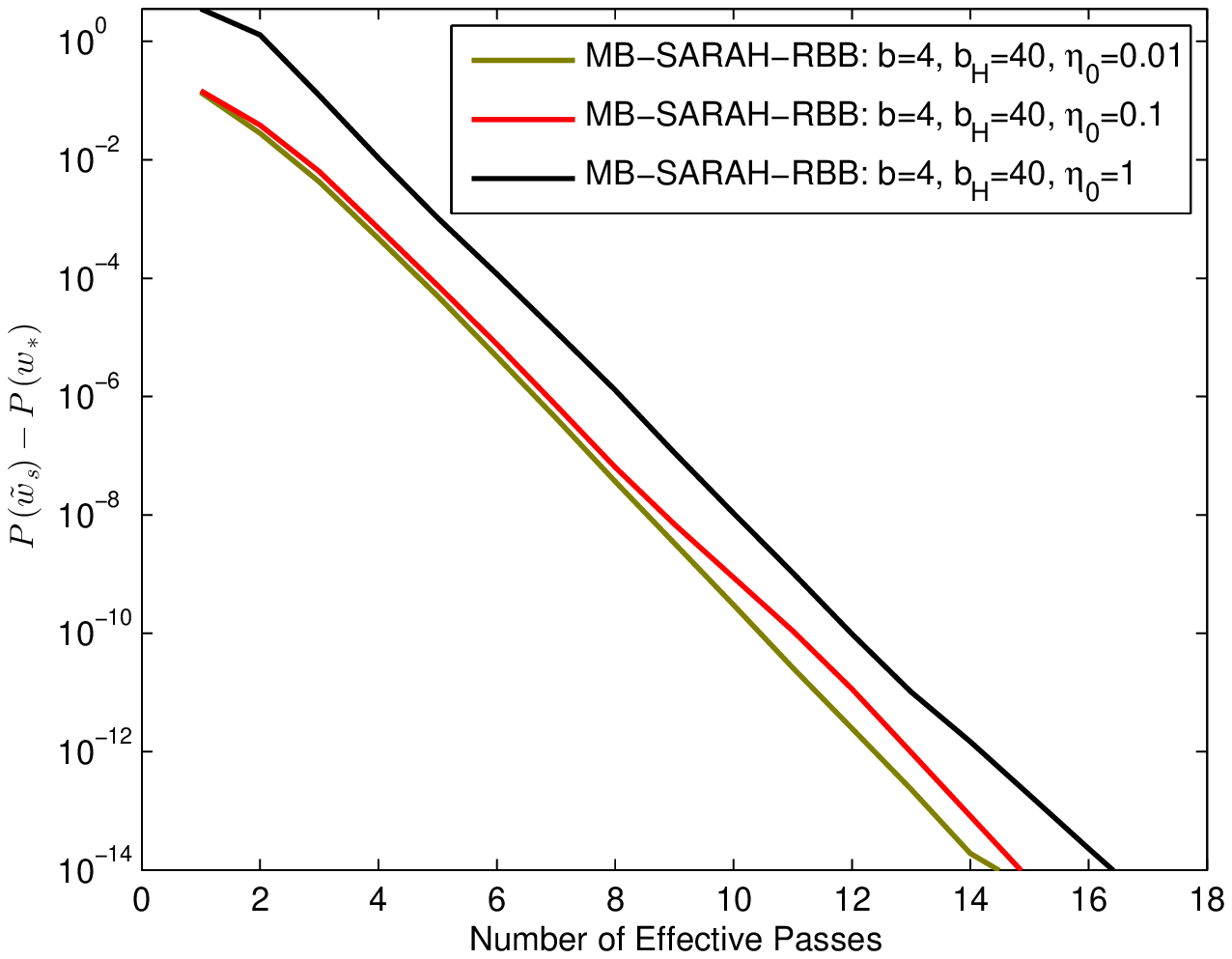}
    \includegraphics[width=0.40\textwidth]{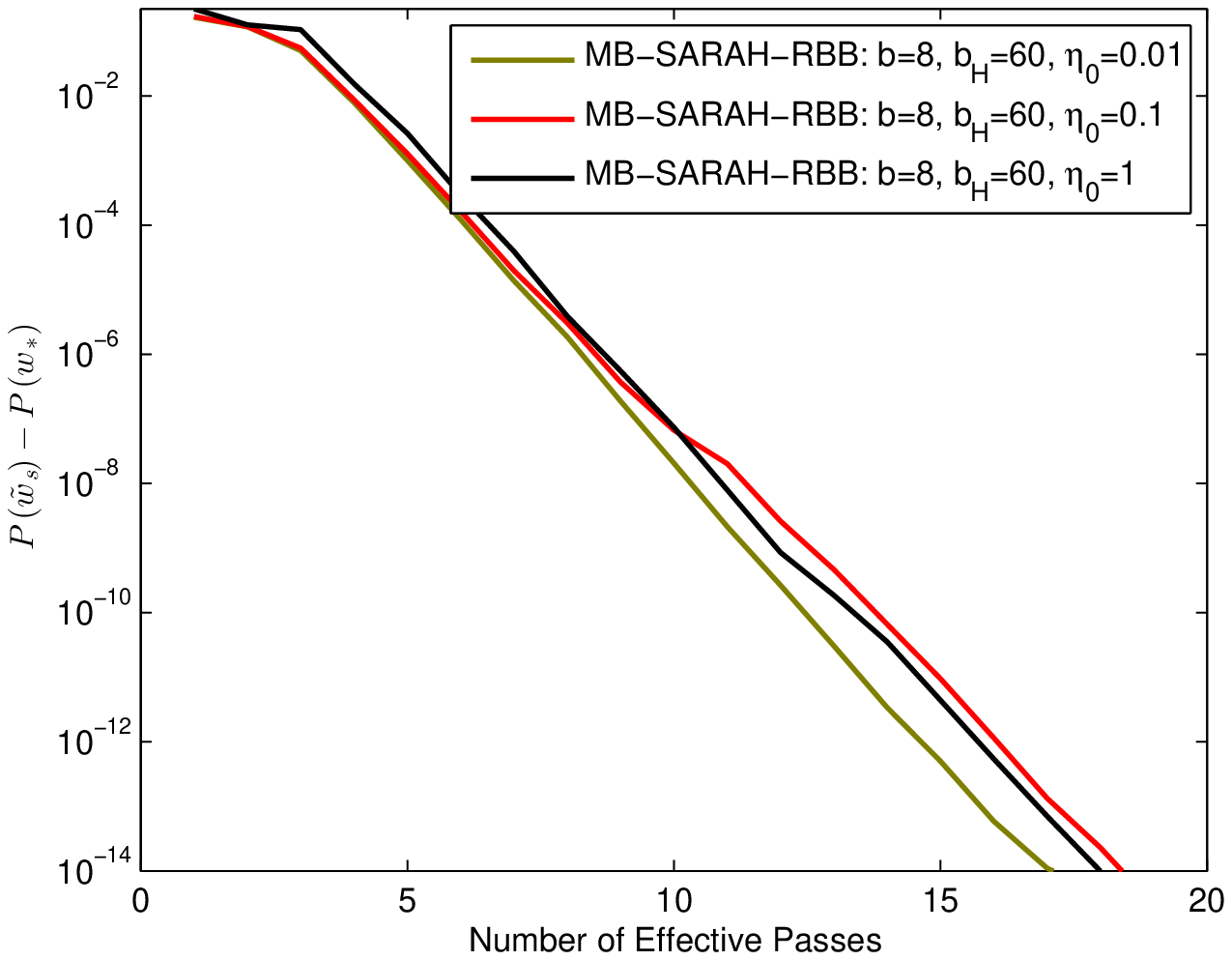}
    \caption{\footnotesize Different initial step sizes for MB-SARAH-RBB on $a8a$ (left) and $ijcnn1$ (right).}
  \label{dif-step-size}
\end{figure*}

 It can be seen from Fig. \ref{dif-step-size} that, the performance of MB-SARAH-RBB is not influenced by the choice of $\eta_0$.

\subsection{Comparison with mS2GD-RBB}
 mS2GD-RBB, proposed by Yang et al. \cite{yang2018random}, uses the similar strategy as us to compute step size for mS2GD. One of the key difference between mS2GD-RBB and MB-SARAH-RBB is that the latter multiply a positive constant, $\gamma$, in \eqref{eqRBB-1}. To further show the efficacy of our MB-SARAH-RBB method, we compare these two methods. All parameters of mS2GD-RBB are set as suggested in \cite{yang2018random}. Also, we use the dashed lines to represent mS2GD-RBB and the solid lines to represent MB-SARAH-RBB.

\begin{figure*}[htbp]
   \centering
    \includegraphics[width=0.40\textwidth]{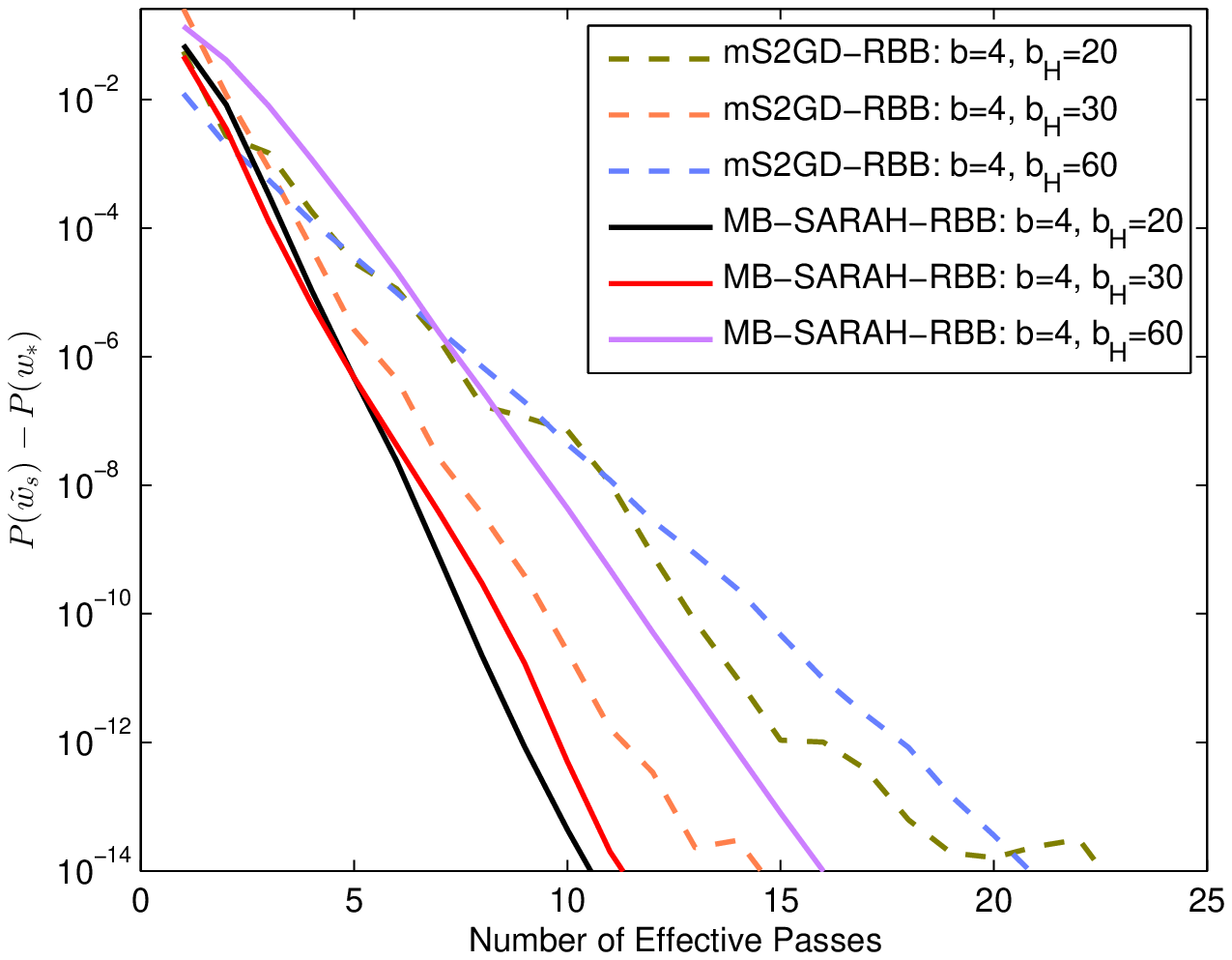}
    \includegraphics[width=0.40\textwidth]{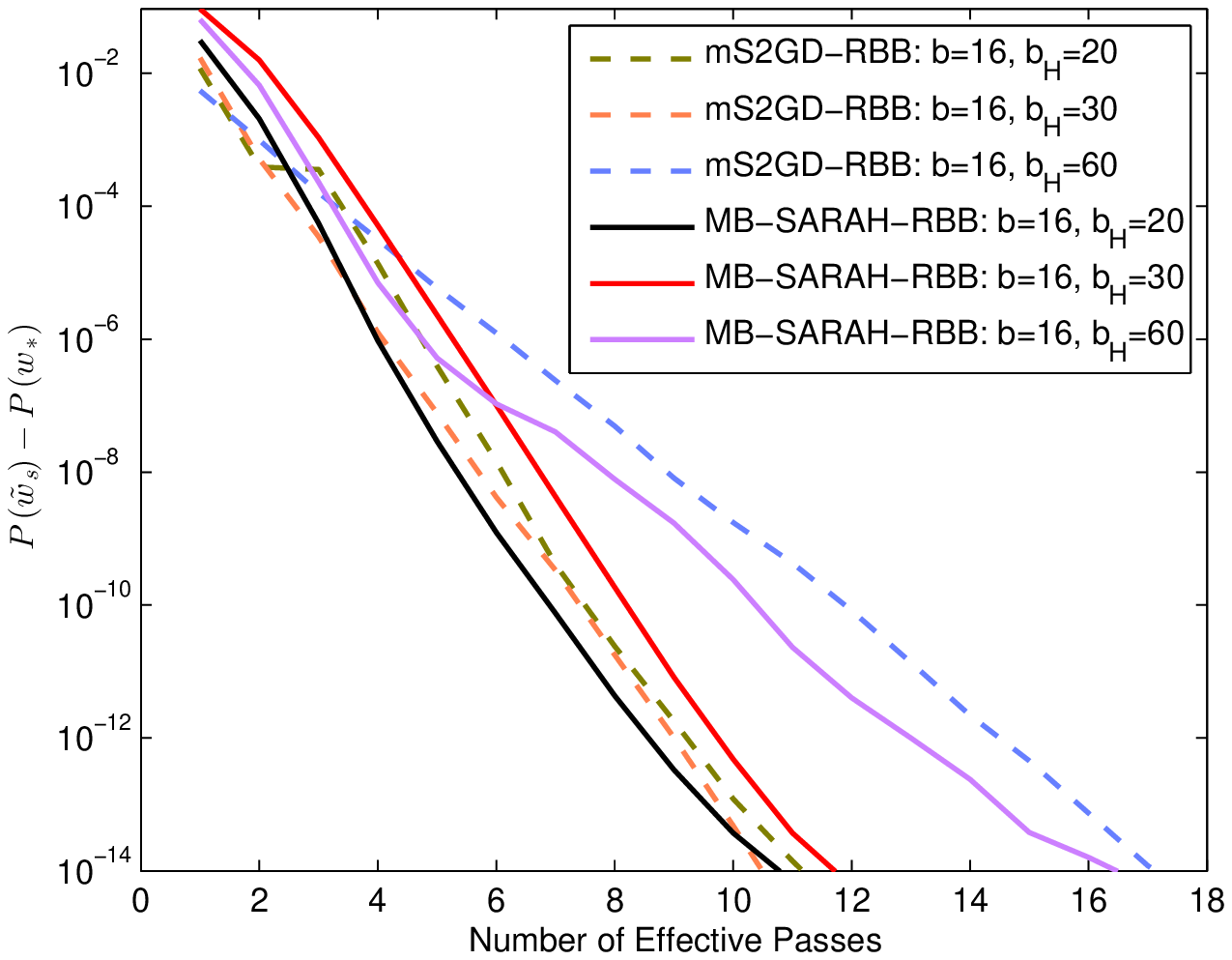}
    \caption{\footnotesize Comparison of MB-SARAH-RBB and mS2GD-RBB on \emph{a8a}. The dashed lines stand for  mS2GD-RBB with different mini-batch sizes $b$ and $b_H$. The solid lines correspond to MB-SARAH-RBB with different mini-batch sizes $b$ and $b_H$.}
  \label{cp-our-ms2gd-a8a}
\end{figure*}

\begin{figure*}[htbp]
   \centering
    \includegraphics[width=0.40\textwidth]{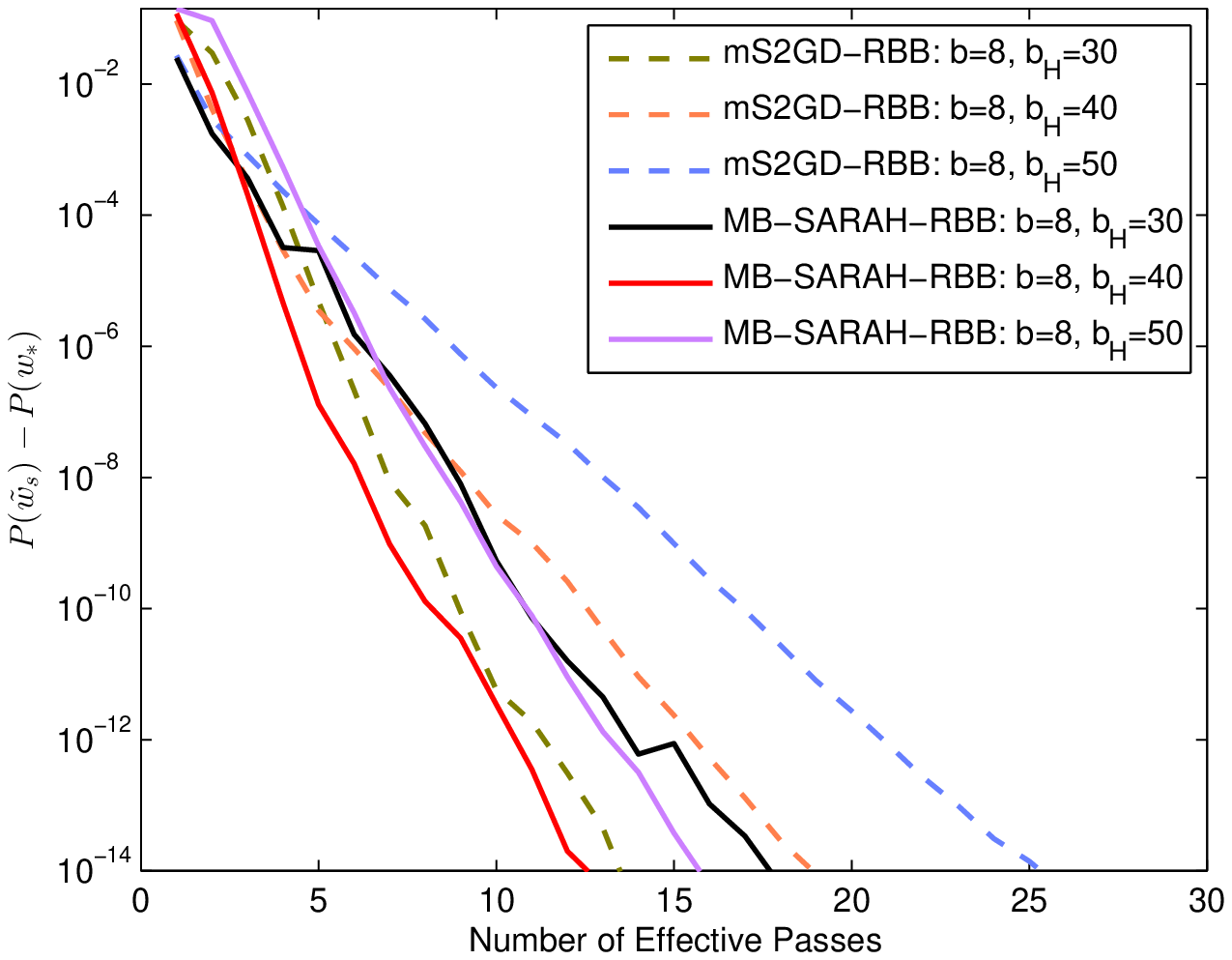}
    \includegraphics[width=0.40\textwidth]{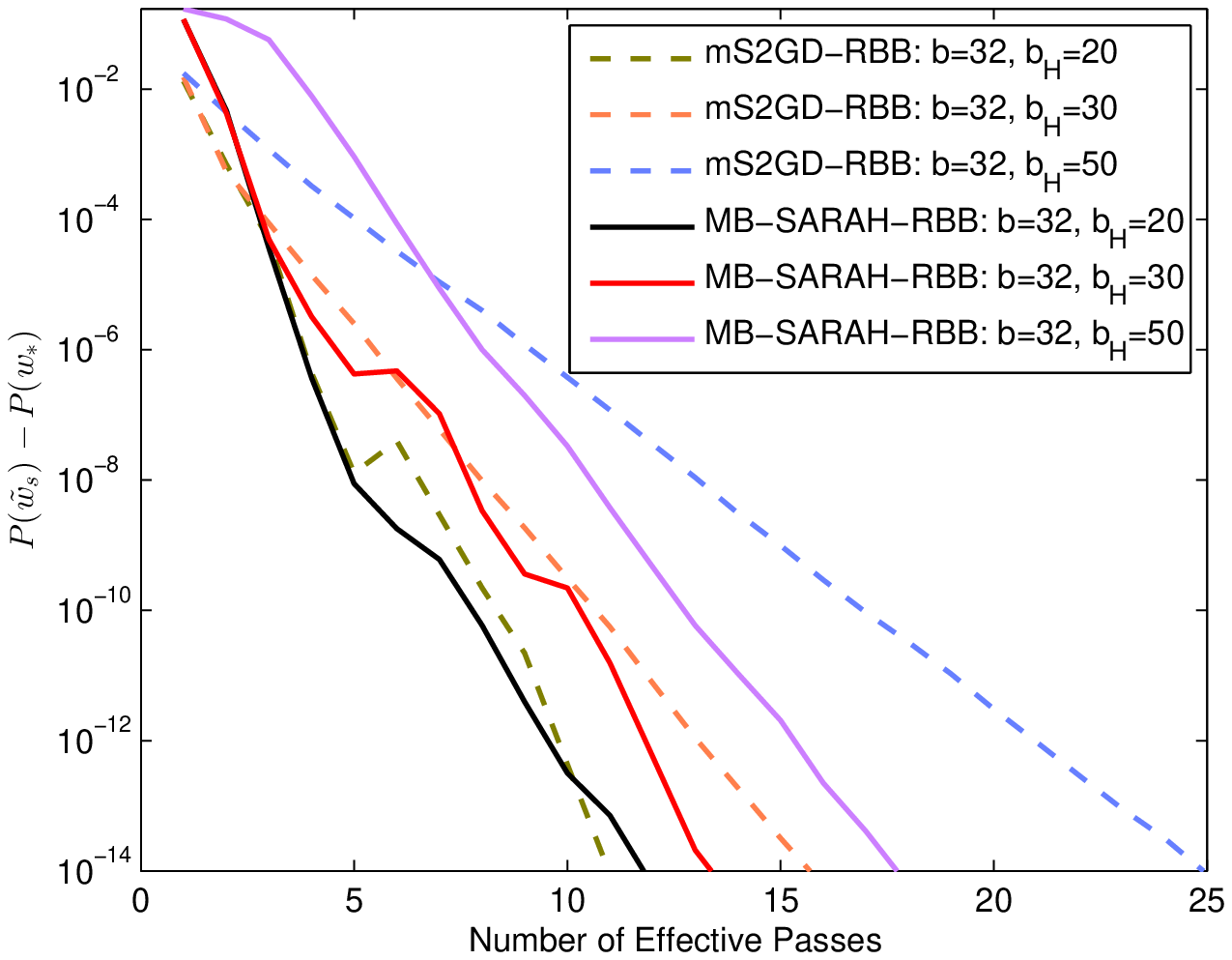}
    \caption{\footnotesize Comparison of MB-SARAH-RBB and  mS2GD-RBB on \emph{ijcnn1}. The dashed lines stand for  mS2GD-RBB with different mini-batch sizes $b$ and $b_H$. The solid lines  correspond to MB-SARAH-RBB with different mini-batch sizes $b$ and $b_H$. }
  \label{cp-our-ms2gd-ijcnn1}
\end{figure*}

Fig. \ref{cp-our-ms2gd-a8a} and \ref{cp-our-ms2gd-ijcnn1} show that our MB-SARAH-RBB method performs better than or is comparable to mS2GD-RBB. It also indicates that the performance of the original MB-SARAH method can be improved by introducing the improved RBB method.

\subsection{Comparison with other related methods}
\label{section4.1}
In this section, we compare our MB-SARAH-RBB method with the following methods:\\
1) \textbf{SAG-LS}: Stochastic average gradient method with line search \cite{schmidt2015non}.\\
2) \textbf{SAG-BB}: Stochastic average gradient method with BB step size \cite{tan2016barzilai}.\\
3) \textbf{SVRG}: Stochastic variance reduction gradient method \cite{johnson2013accelerating}. For SVRG, the best constant step size was employed.\\
4) \textbf{SVRG-BB}: Stochastic variance reduction gradient method with BB step size \cite{tan2016barzilai}.\\
5) \textbf{mS2GD-BB}: A batch version of SVRG-BB proposed in \cite{yang2018mini}. For mS2GD-BB, all parameters were set as suggested in \cite{yang2018mini}.\\
6) \textbf{SDCA}: Stochastic descent coordinate ascent method \cite{shalev2013stochastic}. We chose the parameters as suggested in \cite{shalev2013stochastic}. Also, the best constant step size was employed\\
7) \textbf{Acc-Prox-SVRG}: an version of accelerated stochastic gradient method in \cite{nitanda2014stochastic}. We chose $\eta=1$, $m=\delta b$ ($\delta=10$), and $\beta_{k}=\frac{b-2}{b+2}$ ($b=100$), as suggested in \cite{nitanda2014stochastic}. Also, the best constant step size was employed. \\
8) \textbf{Acc-Prox-SVRG-BB}: an variant of Acc-Prox-SVRG, with the BB step size in \cite{yang2019accelerated}. We set the parameters of Acc-Prox-SVRG-BB as suggested in  \cite{yang2019accelerated}.\\
9) \textbf{Acc-Prox-SVRG-RBB}: an variant of Acc-Prox-SVRG, with the RBB step size in \cite{yang2018random}. For Acc-Prox-SVRG-BB, we set the best batch size $b$ and $b_H$ for different data sets.\\
10) \textbf{MSVRG-OSS}: the MSVRG method with an online step size \cite{yang2019mini} The parameters were set as suggested in \cite{yang2019mini}.\\

 \begin{figure*}[htbp]
 \centering
  \includegraphics[width=5.5cm,height=4.5cm]{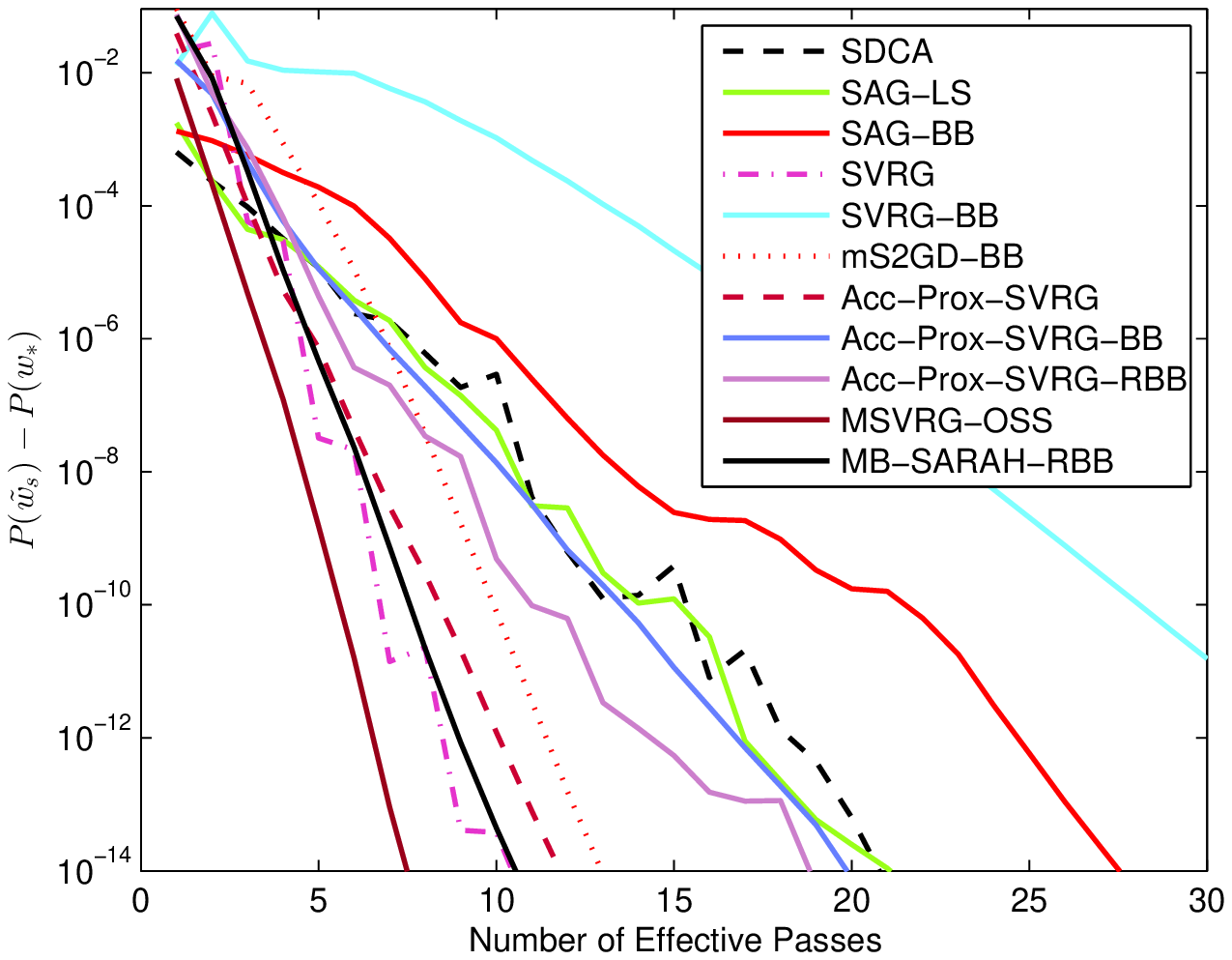}
  \includegraphics[width=5.5cm,height=4.5cm]{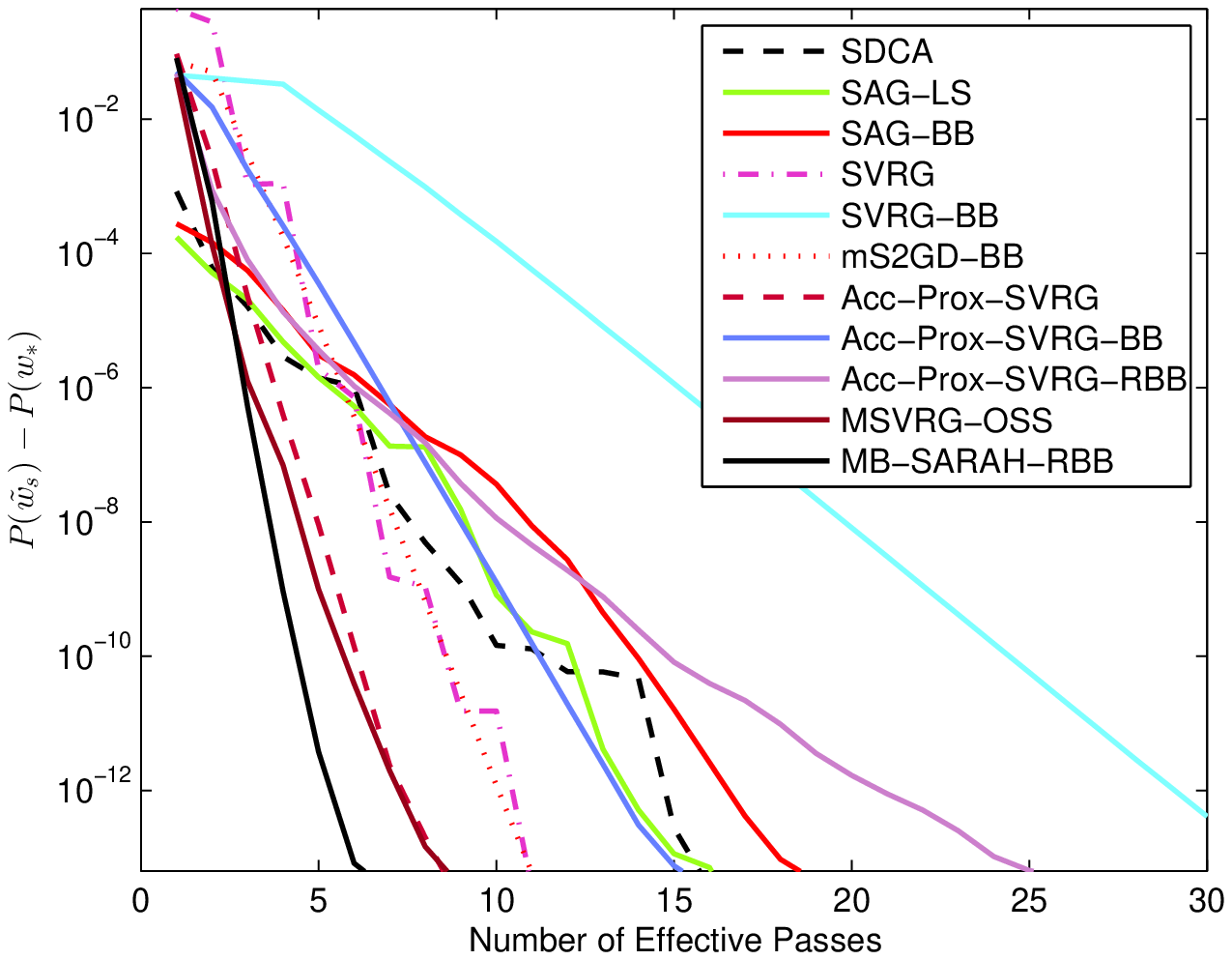}
  \includegraphics[width=5.5cm,height=4.5cm]{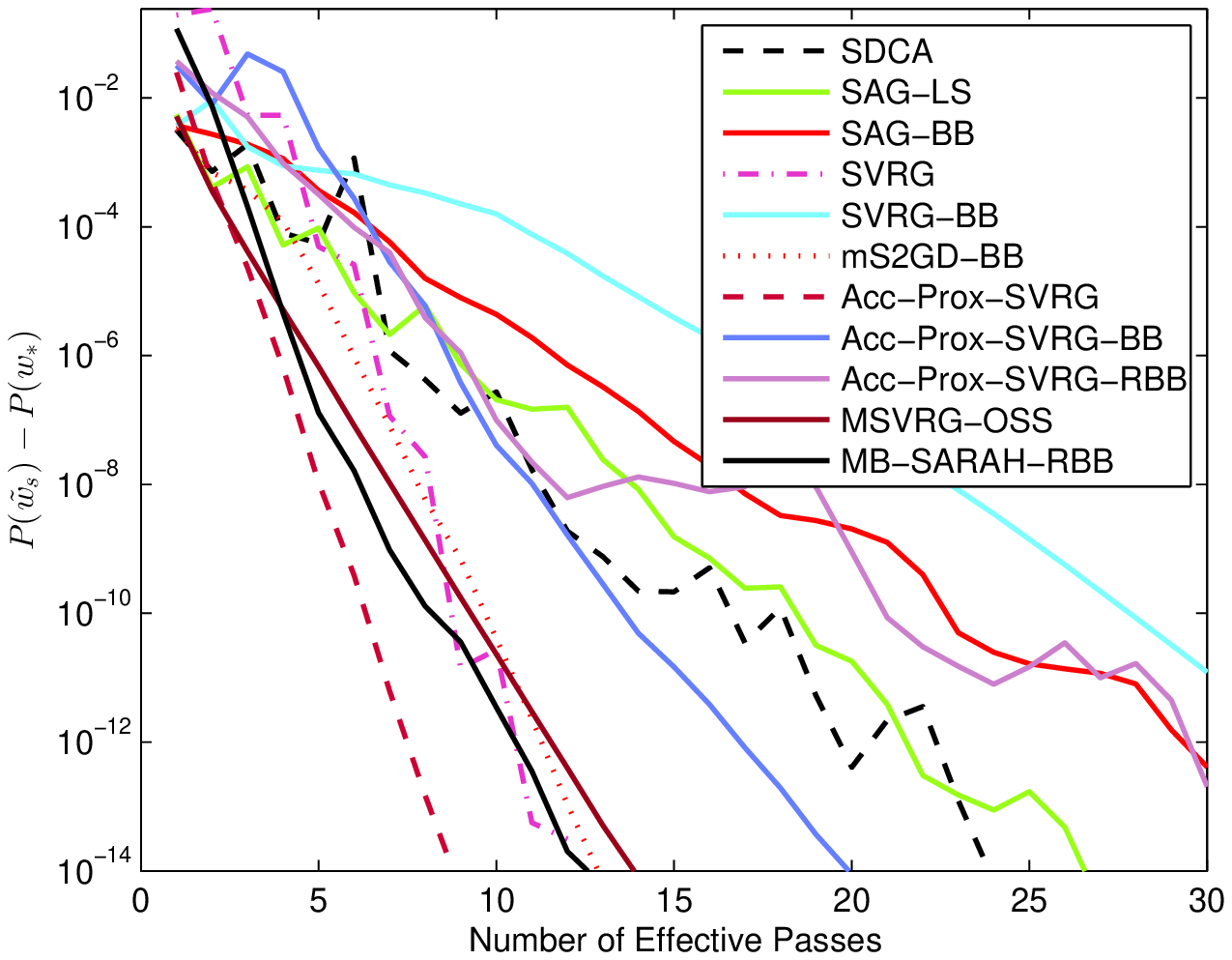}
 \caption{\footnotesize Comparison of different methods on three data sets: $a8a$ (left), $w8a$ (middle) and $ijcnn1$ (right).}
 \label{cpo-t}
 \end{figure*}

As can be seen from Fig. \ref{cpo-t}, our MB-SARAH-RBB method outperforms or matches state-of-the-art algorithms.

\section{Conclusion}
\label{conclusion}
This paper is motivated by a defect related to SARAH for step size choice. Specifically, common implementations of such schemes provide little guidance in specifying step size parameters that prove crucial in practical performance. Accordingly, we propose using the RBB method to automatically evaluate step size for MB-SARAH and obtain MB-SARAH-RBB. We prove that our MB-SARAH-RBB method converges with a linear convergence rate for strongly convex objective functions. We analyze the complexity of MB-SARAH-RBB and show that the complexity of the original MB-SARAH method is improved by combining the RBB method. Numerical results show that our MB-SARAH-RBB method outperforms or matches state-of-the-art algorithms.

%

%
%

\appendices

\section{Proofs}
\label{proof}


\subsection{Proof of Lemma \ref{derivation_alg}}
\label{appa}

According to \eqref{eq1-3-23} and $w_{k+1}=w_k-\eta_k v_k $, we have
\begin{align*}
\E[ P(w_{k+1})] & \overset{\eqref{eq1-3-23}}{\leq}  \E[ P(w_{k})] - \eta_k \E[\nabla P(w_{k})^\top v_{k}]\\
& + \frac{L\eta_k^2}{2} \E [ \| v_{k} \|^2 ].
\end{align*}

Employing the strong convexity of $f_i(w)$, we have the following upper boundary for the RBB step size from Algorithm 1.

\begin{eqnarray*}
\label{eta-1}
 \eta^{'}_{k} &=& \frac{\gamma}{b_H}\cdot \frac{\|w_{k}-w_{k-1}\|^{2}}{(w_{k}-w_{k-1})^{T}(\nabla P_{S_H}(w_{k})-\nabla P_{S_H}(w_{k-1}))} \notag\\
 &\leq & \frac{\gamma}{b_H}\cdot \frac{\|w_{k}-w_{k-1}\|^{2}}{\mu \|w_{k}-w_{k-1}\|^{2}}=\frac{\gamma}{\mu b_H}
\end{eqnarray*}

Therefore, we ascertain that
\begin{align*}
&\E[ P(w_{k+1})]\\
 & \leq  \E[ P(w_{k})] - \frac{\gamma}{\mu b_H} \E\left[\nabla P(w_{k})^\top v_{k}\right]
  +  \frac{L\gamma^2}{2\mu^2 b_H^2} \E [ \| v_{k} \|^2 ]\\
&= \E[P(w_k)]-\frac{\gamma}{2\mu b_H}\E \left[\|\nabla P(w_k)\|^2\right]+\frac{\gamma}{2\mu b_H}\E \large[\|\nabla P(w_k)\\
&-v_k\|^2\large] - \left( \frac{\gamma}{2\mu b_H}-\frac{L\gamma^2}{2\mu^2 b_H^2}\right)\E \left[\|v_k\|^2\right],
\end{align*}
 where the last equality is according to the fact that $a^Tb=\frac{1}{2}\left[\|a\|^2+\|b\|^2-\|a-b\|^2\right]$.

By summing over $k=0,\ldots,m$, we have
\begin{align*}
& \E[ P(w_{m+1})]\\
 & \leq \E [P(w_0)]- \frac{\gamma}{2\mu b_H} \sum_{k=0}^{m} \E \left[ \|\nabla P(w_k)\|^2\right]
 + \frac{\gamma}{2\mu b_H}\\
& \cdot \sum_{k=0}^{m} \E [\|\nabla P(w_k)
 -v_k\|^2]-\left(\frac{\gamma}{2\mu b_H}-\frac{L\gamma^2}{2\mu^2b_H^2}\right) \sum_{k=0}^{m} \E\left[\|v_k\|^2\right].
\end{align*}

Further, we have
\begin{align*}
& \sum_{k=0}^{m} \E \left[ \|\nabla P(w_k)\|^2\right] \\
& \leq \frac{2\mu b_H}{\gamma} \E[P(w_0)-P(w_{m+1})] +\sum_{k=0}^{m}  \E \left[\|\nabla P(w_k)-v_k\|^2\right]\\
&-\left(1-\frac{L\gamma}{\mu b_H}\right) \sum_{k=0}^{m} \E\left[\|v_k\|^2\right]\\
& \leq \frac{2\mu b_H}{\gamma}\E[P(w_0)-P(w_*)] +\sum_{k=0}^{m}  \E \left[\|\nabla P(w_k)-v_k\|^2\right]\\
&-\left(1-\frac{L\gamma}{\mu b_H}\right) \sum_{k=0}^{m} \E\left[\|v_k\|^2\right],
\end{align*}
where the last inequality follows since $w_{*}= \argmin_{w} P(w)$.

\subsection{Proof of Theorem \ref{th1}}
\label{appb}
From Lemma \ref{lemva}, we have
\begin{align*}
\E[ \| \nabla P(w_{k}) - v_{k} \|^2 ]  \leq \frac{L^2\gamma^2}{\mu^2 b b_H^2} \left( \frac{n-b}{n-1} \right) \sum_{j=1}^{k} \mathbb{E}[\| v_{j-1} \|^2].
\end{align*}
Since $\|\nabla P(w_0)-v_0\|^2=0$, hence by summing over $k=0,\ldots,m$, we obtain
\begin{align*}
&\sum_{k=0}^{m} \E[ \| \nabla P(w_{k}) - v_{k} \|^2 ]  \leq \frac{L^2\gamma^2}{\mu^2 b b_H^2} \left( \frac{n-b}{n-1} \right) \bigl[m \E[\|v_0\|^2]\\
&+(m-1)\E[ \| v_1 \|^2]+\ldots+\E [\|v_{m-1}\|^2]\bigr].
\end{align*}
Further, we have
\begin{eqnarray}
\sum_{k=0}^{m} \E[ \| \nabla P(w_{k}) - v_{k} \|^2 ]-\biggl(1-\frac{L\gamma}{\mu b_H}\biggr) \sum_{k=0}^{m} \E[\|v_k\|^2]\notag\\
\leq  \frac{L^2\gamma^2}{\mu^2 b b_H^2} \left( \frac{n-b}{n-1} \right) \bigl[m \E[\|v_0\|^2]+(m-1)\E[ \| v_1 \|^2]\notag\\
+\ldots+\E [\|v_{m-1}\|^2]\bigr]-\biggl(1-\frac{L\gamma}{\mu b_H}\biggr) \sum_{k=0}^{m} \E[\|v_k\|^2]\notag\\
 \leq \biggl[ \frac{L^2\gamma^2}{\mu^2 b b_H^2} \left( \frac{n-b}{n-1} \right)m-\left(1-\frac{L\gamma}{\mu b_H}\right)\biggr]\sum_{k=1}^{m}\E[\|v_{k-1}\|^2]\notag\\
\overset{\eqref{th1-f11}}{\leq} 0 ~~~~~~~~~~~~~~~~~~~~~~~~~~~~~~~~~~~~~~~~~~~~~~~~~~~~~~~~~~~~~  \label{eq:apl1}
\end{eqnarray}
Therefore, by Lemma \ref{derivation_alg}, we have
\begin{eqnarray*}
\sum_{k=0}^{m} \E[ \| \nabla P(w_{k})\|^2 ]  \leq \frac{2\mu b_H}{\gamma}\E[P(w_0)-P(w_*)]\\
 +\sum_{k=0}^{m}  \E [\|\nabla P(w_k)-v_k\|^2]-\biggl(1-\frac{L\gamma}{\mu b_H}\biggr) \sum_{k=0}^{m} \E\left[\|v_k\|^2\right]\\
 \overset{\eqref{eq:apl1}}{\leq} \frac{2\mu b_H}{\gamma}\E[P(w_0)-P(w_*)]. ~~~~~~~~~~~~~~~~~~~~~~~~~~~~ \label{eq:l1}
\end{eqnarray*}

By the definition of $\widetilde{w}_s$ in Algorithm \ref{alg1} and $\widetilde{w}_s=w_m$, we have that
\begin{eqnarray*}
\E[\|\nabla P(w_m)\|^2]&=&\frac{1}{m+1}\sum_{k=0}^{m}\E[\|\nabla P(w_k)\|^2]\\
&\leq & \frac{2\mu b_H}{\gamma(m+1)} \E[P(w_0)-P(w_*)]
\end{eqnarray*}

\subsection{Proof of Theorem \ref{th2}}
\label{appc}

Note that $w_0=\widetilde{w}_{s-1}$ and $\widetilde{w}_s=w_m$, $s\geq 1$. From Theorem \ref{th1}, we obtain
\begin{eqnarray*}
\E[\|\nabla P(\widetilde{w}_s)| \widetilde{w}_{s-1}\|^2]&=&\E[\|\nabla P(\widetilde{w}_s)|w_0\|^2]\\
&\leq & \frac{2\mu b_H}{\gamma(m+1)} \E[P(w_0)-P(w_*)] \\
&\overset{\eqref{scp-1}}{\leq}& \frac{b_H}{\gamma(m+1)}\|\nabla P(w_0)\|^2\\
&=& \frac{b_H}{\gamma(m+1)}\|\nabla P(\widetilde{w}_{s-1})\|^2
\end{eqnarray*}

Hence, taking expectation, we obtain
\begin{eqnarray*}
\E[\|\nabla P(\widetilde{w}_s)\|^2]&\leq & \frac{b_H}{\gamma(m+1)}\E [\|\nabla P(\widetilde{w}_{s-1})\|^2] \\
& \leq & \biggl[ \frac{b_H}{\gamma(m+1)} \biggr]^s \|\nabla P(\widetilde{w}_0)\|^2
\end{eqnarray*}

\ifCLASSOPTIONcaptionsoff
  \newpage
\fi

%
%

\bibliography{ieee}
\bibliographystyle{elsarticle-num}

\end{document}